\documentclass[lettersize,journal]{IEEEtran}
\usepackage{amsmath,amsfonts}
\usepackage{amsthm}
\usepackage{algorithmic}
\usepackage{algorithm}
\usepackage{array}
\usepackage[caption=false,font=normalsize,labelfont=sf,textfont=sf]{subfig}
\usepackage{textcomp}
\usepackage{stfloats}
\usepackage{url}
\usepackage{verbatim}
\usepackage{graphicx}
\usepackage{cite}
\usepackage{booktabs}
\usepackage{thmtools}
\usepackage{thm-restate}
\hyphenation{op-tical net-works semi-conduc-tor IEEE-Xplore}

%
\newtheorem{remark}{Remark}%


\declaretheorem[name=Lemma]{lemma}

\allowdisplaybreaks[4]

\begin{document}

\title{Policy Optimization with Smooth Guidance Learned from State-Only Demonstrations}

\author{Guojian~Wang~\IEEEmembership{Graduate Student Member,~IEEE},
        Faguo~Wu,
        Xiao~Zhang,
        and~Tianyuan~Chen
\thanks{Manuscript received June 14, 2024.}
\thanks{Xiao Zhang and Faguo Wu are the corresponding authors (e-mail: xiao.zh@buaa.edu.cn, faguo@buaa.edu.cn).}
\thanks{Guojian Wang, Xiao Zhang are with the School of Mathematical Sciences, Beihang University, Beijing 100191, China, and with the Key Laboratory of Mathematics, Informatics, and Behavioral Semantics, Ministry of Education, Beijing 100191, China, and also Beijing Advanced Innovation Center for Future Blockchain and Privacy Computing, Beijing 100191, China. Xiao Zhang is also with Zhongguancun Laboratory, Beijing 100194, China (e-mails: wgj@buaa.edu.cn and xiao.zh@buaa.edu.cn).}
\thanks{Faguo Wu and Tianyuan Chen are with the Institute of Artificial Intelligence, Beihang University, Beijing 100191, China, and with the Key Laboratory of Mathematics, Informatics, and Behavioral Semantics, Ministry of Education, Beijing 100191, China, and with Beijing Advanced Innovation Center for Future Blockchain and Privacy Computing, Beijing 100191, China, and also with Zhongguancun Laboratory, Beijing 100194, China (e-mails: faguo@buaa.edu.cn and ctymath@buaa.edu.cn).}
}

\markboth{Journal of \LaTeX\ Class Files,~Vol.~14, No.~8, August~2021}%
{Shell \MakeLowercase{\textit{et al.}}: A Sample Article Using IEEEtran.cls for IEEE Journals}


\maketitle

\begin{abstract}
The sparsity of reward feedback remains a challenging problem in online deep reinforcement learning (DRL). Previous approaches have utilized offline demonstrations to achieve impressive results in multiple hard tasks. However, these approaches place high demands on demonstration quality, and obtaining expert-like actions is often costly and unrealistic. To tackle these problems, we propose a simple and efficient algorithm called Policy Optimization with Smooth Guidance (POSG), which leverages a small set of state-only demonstrations (where expert action information is not included in demonstrations) to indirectly make approximate and feasible long-term credit assignments and facilitate exploration. Specifically, we first design a trajectory-importance evaluation mechanism to determine the quality of the current trajectory against demonstrations. Then, we introduce a guidance reward computation technology based on trajectory importance to measure the impact of each state-action pair, fusing the demonstrator's state distribution with reward information into the guidance reward. We theoretically analyze the performance improvement caused by smooth guidance rewards and derive a new worst-case lower bound on the performance improvement. Extensive results demonstrate POSG's significant advantages in control performance and convergence speed in four sparse-reward environments, including the grid-world maze, Hopper-v4, HalfCheetah-v4, and Ant maze. Notably, the specific metrics and quantifiable results are investigated to demonstrate the superiority of POSG.
\end{abstract}

\begin{IEEEkeywords}
deep reinforcement learning, sparse rewards, state-only demonstrations, policy optimization
\end{IEEEkeywords}

\section{Introduction}
\IEEEPARstart{I}{n} recent years, deep reinforcement learning (RL) has demonstrated remarkable accomplishments in tackling sequential decision-making challenges across diverse domains, including the Arcade Learning Environment~\cite{Mnih2015HumanlevelCT,bellemare2013arcade,ecoffet2021first}, the game of Go~\cite{Silver2016MasteringTG}, continuous locomotive control~\cite{Lillicrap2016ContinuousCW, Schulman2015TrustRP,fujimoto2018addressing}, and robotic navigation~\cite{Florensa2017StochasticNN,zhanghierarchical,li2022active}. Despite these celebrated achievements, reinforcement learning remains a formidable task in scenarios characterized by sparse or delayed rewards~\cite{yang2021exploration,gangwani2020learning}, primarily due to the challenge of striking a balance between exploration and exploitation in environments with sparse or delayed rewards~\cite{guo2020memory}. Conventional deep RL algorithms confront exploration difficulties, as fully exploring the entire state-action space is often unfeasible and cannot be spontaneously guaranteed, particularly in settings with sparse environmental rewards~\cite{jing2020reinforcement}.

One possible solution to the problem of sparse or delayed rewards is temporal credit assignment. Credit assignment aims to understand the relevance between actions and outcomes and measure the impact of actions the agent performs on future rewards. Some recent studies propose building an environmental model to obtain a more fine-grained description of the action's effect~\cite{Ha2018WorldM, Kaiser2019ModelBasedRL, Schrittwieser2019MasteringAG}. However, this approach increases the computational burden, and acquiring an accurate model in complex and partially observed environments remains difficult. On the other hand, model-free credit assignment methods leverage hindsight information~\cite{harutyunyan2019hindsight}, counterfactual~\cite{mesnard2021counterfactual}, episode memory~\cite{Hung2018OptimizingAB}, transformers~\cite{Ferret2019CreditAA}, and return decomposition~\cite{arjona2019rudder} to perform long-term credit assignments. However, the premise of these methods is to obtain good trajectories with sparse rewards, and these designated methods might cause the problem of unstable training and parameter sensibility.

Many research advances have assisted policy exploration of agents by learning from demonstrations (LfD)~\cite{zhou2020watch}. An intuitive LfD approach enhances RL by data augmentations, which maintains the expert demonstrations in a replay buffer for value estimation~\cite{hester2018deep,vecerik2017leveraging,gulcehre2019making,libardi2021guided}. Some LfD methods utilize demonstrations to pre-train the policy by supervised learning~\cite{silver2016mastering,cruz2017pre}. These algorithms force the agent's policy to conform to the expert's policy and do not reuse them during the policy optimization procedure. Recent LfD studies draw inspiration from imitation learning and encourage the agent to mimic the demonstrated actions~\cite{kang2018policy,Sun2018TruncatedHP,jing2020reinforcement,rengarajan2022reinforcement}. Specifically, these methods either augment the original RL loss function with a divergence regularization term or design a new shaping reward derived from a distribution divergence function to force expert-alike exploration. 


In summary, the limitations of both CA and LfD approaches impede their practicality in sparse-reward settings. Firstly, it can be troublesome for CA methods to obtain highly rewarded trajectories in sparse-reward environments with large state spaces. Moreover, CA methods pose a computational burden and face challenges in accurately estimating the influence of individual state-action pairs. Secondly, most LfD methods demand high-quality samples and rely on flawless and sufficient demonstrations, including complete state-action information. Our study lies at the intersection of CA and LfD methods, aiming to mitigate their shortcomings. This paper introduces a straightforward and practical reinforcement learning approach named Policy Optimization with Smooth Guidance (POSG), designed for seamless integration into existing RL algorithms. Our method facilitates approximate long-term credit assignments and extends the applicability of CA approaches. Furthermore, this method addresses the limitations of LfD approaches by introducing a dense demonstration-based guidance reward function, thereby reducing the requirements for LfD demonstrations.

The proposed approach leverages only a few, or even a single state-only demonstration trajectory, to indirectly estimate the impact of each state-action pair. The fundamental insight lies in utilizing state-only demonstration trajectories to assess the impact of current state-action pairs, thereby amalgamating state distribution information from demonstrations with their associated reward signals. We introduce two technologies to compute the impact of each state-action pair: a trajectory-importance evaluation mechanism and a smooth guidance reward computation technology. Specifically, the trajectory-importance evaluation mechanism estimates trajectory importance based on the maximum mean discrepancy (MMD) distance to demonstrations and the returns of corresponding trajectories. Subsequently, the guidance reward for each state-action pair is derived through a smooth weighted average of trajectory importance. Furthermore, we establish a novel worst-case lower bound for POSG policy optimization and offer a theoretical guarantee of performance improvement. Extensive experimental evaluations demonstrate that POSG surpasses other baseline algorithms across a discrete grid-world maze and three continuous locomotion control tasks.


Our main contributions are summarized as follows:
\begin{enumerate}
  \item To avoid the CA's dilemma, we propose an RL method that utilizes offline state-only demonstrations to achieve approximate credit assignments in sparse-reward settings.
  \item POSG can use a single state-only demonstration trajectory to solve RL's exploration problem and improve RL's sample efficiency, demonstrated experimentally in Section~\ref{sec:abla_amount}.
  \item No additional neural networks are needed to train. Our algorithm is simple in form and explicit in physical meaning, which fuses the distribution information of demonstrations and the return signals of relevant trajectories.
  \item A new worst-case lower bound is deduced to provide a performance improvement guarantee for POSG.
  \item This study discusses the superior performance of POSG over other state-of-the-art RL algorithms across a discrete grid-world maze and three continuous locomotion control tasks.  Significantly, the analysis focuses on specific metrics and quantifiable results to illustrate the superiority of POSG.
\end{enumerate}

The rest of this article is organized as follows. Section~\ref{sec:related} describes some important progress of the recent related work. Section~\ref{sec:backgroung} briefly introduces the related preliminary knowledge. Then, the method we propose is introduced in detail in Section~\ref{sec:approach}. Section~\ref{sec:thm} describes the theoretical analyses of POSG. The environmental settings are introduced in Section~\ref{sec:setup}. In Section~\ref{sec:experience}, we experimentally demonstrate the feasibility and effectiveness of the proposed algorithm in terms of exploration efficiency and learning speed. Finally, we summarize the main work of this study in Section~\ref{sec:conclusion}.

\section{Related work}
\label{sec:related}

In this section, we summarize recent work related to this research.

\textbf{Learning from Demonstrations.} 
LfD combines RL with expert demonstration data to improve policy exploration and accelerate learning. LfD trains a behavioral policy with a state-action visitation distribution similar to the demonstrator's to accomplish this goal. Early LfD algorithms sample replay buffers containing expert demonstrations and self-generated data simultaneously to enhance the learning ability of the agent~\cite{hester2018deep,vecerik2017leveraging}. Self-imitation learning (SIL) methods~\cite{oh2018self,gangwani2019learning,libardi2021guided} train the agent to imitate its own past experiences only when the return of the previous episode is greater than the value estimate of the agent or returns of trajectories in the replay buffer. Episodic reinforcement learning methods utilize episodic memories to estimate the value of the state precisely and propagate its value to the previous states~\cite{pritzel2017neural,blundell2016model,lin2018episodic,lee2019sample,hu2021generalizable,le2021model}. Many other previous works introduce novel methods that enable the agent learns a range of diverse exploratory policies based on episodic memory~\cite{guo2020memory,badia2020never}. This study uses demonstration experiences in the state-action space to construct a shaping function, which can easily be integrated with existing RL methods, such as PPO~\cite{schulman2017proximal}.

\textbf{State-Only Imitation Learning (IL).}
Imitation learning aims to learn a control policy that imitates the behaviors of experts and outputs the same action when the agent receives the observation that occurred in the demonstration data set. The state-only IL method is a branch of imitation learning where the requirement for the demonstrator's action information is alleviated. This approach expands the scope of the realistic application of IL. GAIfO~\cite{torabi2018generative} proposes to learn a policy to output actions that lead to similar effects as demonstrations.  I2L~\cite{Gangwani2020StateonlyIW} and SAIL~\cite{Liu2019StateAI} train the agent by minimizing the distance between state visitation distributions of the current policy and demonstrations. AILO~\cite{Gangwani2022ImitationLF} leverages demonstrations only containing observations to train an intermediary policy whose state transitions are close to the expert dataset. Our method can only utilize a few state-only demonstrations to achieve approximate credit assignments and avoid introducing extra neural networks.

\textbf{Metrics to Compute the Distance between Policies or Trajectories.}
Many studies propose to compute the distance between policies using Kullback–Leibler (KL), Bregman, or $f$-divergence~\cite{lee2019efficient,ghasemipour2020divergence,wang2019divergence}. R\'{e}nyi divergence is introduced to compute the discrepancy of state visitation between different trajectories~\cite{Yuan2022RewardingEV}; however, this divergence is a parametric distance measure, and this approach employs $k$-NN estimator to estimate the parameters of distributions efficiently. MADE maximizes the derivation of state-action visitations of the current policy from the explored regions of previous trajectories~\cite{zhang2021made}. However, MADE relies on estimating the state-action visitation density, which can be non-trivial in high-dimensional control tasks. Similar to our method, CQL~\cite{kumar2020conservative} and MCPO~\cite{jing2020reinforcement} adopt the MMD metric as the discrepancy measure; however, these algorithms require both state and action information of demonstrations to update the policy parameter.

\textbf{Credit Assignment.}
Various works have focused on the credit assignment problem, and credit assignment methods can be integrated with existing RL algorithms easily~\cite{pocius2018comparing}. These methods assist us in understanding the association between sparse or delayed rewards and state or state-action pairs and reduce learning time by providing dense and supplemental rewards~\cite{ng1999policy}. Zheng et al.~\cite{zheng2022adaptive} formulate pairwise weight functions of the state where the action is performed, the future highly-rewarded state, and the time horizon between the two states, which is learned by a special meta-gradient. Hindsight Credit Assignment (HCA)~\cite{harutyunyan2019hindsight} converts the credit assignment problem into a supervised learning task by learning a hindsight probability function of actions. Counterfactual Credit Assignment (CCA)~\cite{mesnard2021counterfactual} uses a value function baseline with a hindsight information vector to implement credit assignment implicitly while avoiding giving away information about the agent's actions to reduce potential bias. In RUDDER~\cite{arjona2019rudder}, an LSTM network is used to learn to redistribute the return of a trajectory to the preceding states before the final rewarding state. Episodic Backward Update~\cite{lee2019sample} and Neural Episodic Control~\cite{pritzel2017neural} enable efficient reward propagation by sampling from episodic memories and updating the value of all transitions more quickly. State Associative Learning~\cite{raposo2021synthetic} propagates credit directly by learning associations between states and arbitrary future states. Xu et al.~\cite{xu2018meta} introduce an online meta-learning method to learn hyper-parameters of a discount $\gamma$ and bootstrapping parameter $\lambda$ of Temporal Difference (TD) for credit assignment. In contrast, our method does not incur high computational costs by dispensing with training auxiliary networks and can be regarded as a simple weighted return decomposition. 

\section{Preliminaries}
\label{sec:backgroung}
This paper studies the credit assignment problem of reinforcement learning in tasks with sparse or delayed rewards. Before describing our method in detail, we introduce some preliminary knowledge about RL in this section. Then, the definition of Maximum Mean Discrepancy (MMD), which plays a vital role in our method, is provided.

\subsection{Reinforcement Learning} 
A typical RL problem is modeled as an infinite-horizon Markov decision process with discrete time, which can be defined as a tuple $ M = (\mathcal{S}, \mathcal{A}, P, r, \rho_{0}, \gamma)$. Here, $\mathcal{S}$ is a discrete or continuous state space, $\mathcal{A}$ denotes a discrete (or continuous) action space, $P: \mathcal{S}\times\mathcal{A}\rightarrow\Pi(\mathcal{S})$ is the transition probability distribution, where $\Pi(\mathcal{S})$ is the space of probability distributions over the state $\mathcal{S}$. In addition, $r_e: \mathcal{S}\times\mathcal{A}\rightarrow[R_{min}, R_{max}]$ is the environmental reward function, in which we assume that the minimum and maximum value of the reward function is $R_{min}$ and $R_{max}$, respectively. Furthermore, $\rho_{0}$ is the initial state distribution, and $\gamma\in [0,1]$ is a discount factor. A stochastic policy $\pi_{\theta}: \mathcal{S}\rightarrow\mathcal{P}(\mathcal{A})$ parameterized by $\theta$, maps the state space $ \mathcal{S} $ to the set of probability distributions over the action space $\mathcal{A}$. Generally, the optimization objective of RL is to find a policy $\pi_{\theta}$ that maximizes the expected discounted return:
\begin{equation}
    \label{eq:rl_objective}
    \eta(\pi_{\theta}) = \mathbb{E}_{s_0,a_0,\dots}\left[\sum_{t=0}^{\infty}\gamma^{t}r_e(s_{t},a_t)\right],
\end{equation}
where $s_0\sim \rho_0(s_0)$, $a_t\sim\pi_{\theta}(a_t \vert s_t)$, and $s_{t+1}\sim P(s_{t+1}\vert s_t,a_t)$. Meanwhile, we use the following standard definitions of the state-action value function $Q_\pi$ and value function $V_\pi$:
\begin{equation}
  \label{eq:Q_func}
  Q_\pi(s_t, a_t)=\mathbb{E}_{s_{t+1}, a_{t+1},\cdots}\left[\sum_{l=0}^{\infty}\gamma^{l} r_e(s_{t+l},a_{t+l})\right],
\end{equation}
and
\begin{equation}
  \label{eq:V_func}
  V_\pi(s_t)=\mathbb{E}_{a_t, s_{t+1}, a_{t+1},\cdots}\left[\sum_{l=0}^{\infty}\gamma^{l} r_e(s_{t+l},a_{t+l})\right],
\end{equation}
where $a_t\sim\pi_{\theta}(a_t \vert s_t)$, and $s_{t+1}\sim P(s_{t+1}\vert s_t,a_t)$. Then, the advantage function is expressed as:
\begin{equation}
  \label{eq:adv_func}
  A_\pi(s_t, a_t) = Q_\pi(s_t, a_t) - V_\pi(s_t).
\end{equation}

When $\gamma<1$, the discounted state visitation distribution $d_\pi$ is given by: $d_\pi(s) = (1-\gamma)\sum_{t=0}^{\infty}\gamma^t \mathbb{P}(s_t=s\vert \pi)$, where $\mathbb{P}(s_t=s\vert \pi)$ denotes the probability of $s_t=s$ with respect to the randomness induced by $\pi$, $P$ and $\rho_0$.

\subsection{Maximum Mean Discrepancy}
\label{sec:mmd_preliminaries}
In this paper, we treat trajectories as deterministic policy distributions and use the maximum mean discrepancy (MMD) to empirically measure the difference (or similarity) between trajectories~\cite{gretton2006kernel, Gretton2012OptimalKC,masood2019diversity}. MMD is a non-parametric test statistic, and different from Kullback-Leibler (KL), Jensen–Shannon (JS), or $f$-divergences, calculating the MMD metric does not require the distribution parameters. This property makes it suitable for the problem considered in this study, where trajectories are treated as deterministic policies. The definition of MMD depends on the choice of function space.

Assume that the probability distributions $p$ and $q$ are defined on the space $\mathbb{X}$, and let $x$ and $y$ be the elements sampled from $p$ and $q$, respectively. Given a reproducing kernel Hilbert space (RKHS) $\mathcal{H}$ with the kernel function $k(\cdot,\cdot)$; we define the MMD as follows:
\begin{equation}
  \label{eq:MMD_rkhs}
  \begin{aligned}
    \mathrm{MMD}^2(p, q, \mathcal{H}) = \mathbb{E}[k(x, x^\prime)] - 2\mathbb{E}[k(x, y)] + \mathbb{E}[k(y, y^\prime)],
  \end{aligned}
\end{equation}
where $x, x^\prime\ \mathrm{i.i.d.}\sim p$ and $y, y^\prime\ \mathrm{i.i.d.} \sim q$. Using Eq.~\eqref{eq:MMD_rkhs} to estimate the distance between $p$ and $q$ is tractable because of the inherent property of RKHS~\cite{gretton2006kernel,Gretton2012OptimalKC}.

\subsection{Trajectory-Space Smoothing}
To achieve this goal, we first introduce a probability density model for trajectories through a specific state-action pair. Let $\pi(a\vert s)$ denote a behavioral policy and the trajectory distribution $p_{\pi}$ induced by $\pi$ be expressed as:
\begin{equation}
\label{eq:p_pi}
  p_\pi(\tau) = \rho_0(s_0)\prod_{t=0}^{\infty}p(s_{t+1}\vert s_t, a_t)\pi(a_t\vert s_t),
\end{equation}
where $\rho_0(s_0)$ is is the distribution of the initial state $s_0$.

Given a state's information subset $x$, we can sample trajectories containing $x$ according to the distribution $p_\pi(\tau)$. Mathematically, suppose $x = f(s)$, then $x = s$ when $f$ is the identified map, and $x \neq s$ when $f$ is a general feature extraction function and not the identified map. In our experiments, $x$ is the coordinate $c$ of the center of mass (CoM). We design the following probability density model for trajectories including $x$:
\begin{equation}
  \label{eq:x_traj_distribution}
  p_\pi(\tau\vert x) := \frac{p_\pi(\tau)\mathbb{I}[x\in f(\tau)]}
  {\int_{\upsilon} p_\pi(\upsilon)\mathbb{I}[x\in f(\upsilon)]\mathrm{d}\upsilon},
\end{equation}
where $f(\tau) := \{f(s_0), f(s_1), \cdots\}$, $\mathbb{I}$ is the indicator function. $\tau$ is the current target trajectory and $\upsilon$ is an arbitrary trajectory, all sampled from $p_\pi$. In this manner, $p_{\pi}(\tau\vert x)$ is a conditional probability density function of trajectories through $x$ and describes the probability of $\tau$ among all trajectories through $x$.

\section{Proposed Approach}\label{sec:approach}
This section presents smooth policy optimization (POSG), a computationally efficient framework to achieve approximate long-term credit assignments. The key insight is that we propose to exploit state-only demonstration trajectories to measure the relative importance of current state-action pairs. Our approach fuses the distributional information of demonstrations with the return information of trajectories.

\subsection{State-Only Trajectory Distance}
In this paper, we treat trajectories as deterministic policy distributions and use the maximum mean discrepancy to empirically measure the difference (or similarity) between trajectories~\cite{gretton2006kernel, Gretton2012OptimalKC,masood2019diversity}. Given a current trajectory $\tau$ and a state-only demonstration trajectory $\tau_E$, the nonparametric test statistic, the maximum mean discrepancy, can be used to measure the distance between them:
\begin{equation}
    \label{equ:MMD_traj}
    \begin{aligned}
        \mathrm{MMD}^2(\tau,\tau_E,\mathcal{H}) = &\underset{o, o^\prime\sim\rho_\tau}{\mathbb{E}}\left[k\left(o, o^\prime\right)\right]\\
        &- 2\underset{\begin{subarray}{c}o\sim\rho_\tau\\ o_E\sim\rho_{E}\end{subarray}}{\mathbb{E}}\left[k(o,o_E)\right]\\
        &+ \underset{o, o_E^\prime\sim\rho_{E}}{\mathbb{E}}\left[k(o, o_E^\prime)\right],
    \end{aligned} 
\end{equation}
where $k(\cdot,\cdot)$ is the kernel of a reproducing kernel Hilbert space $\mathcal{H}$, $\rho_\tau(\cdot)$ and $\rho_E(\cdot)$ are the state visitation distributions of $\tau$ and $\tau_E$, and $o, o^\prime$ and $o_E$, $o_E^\prime$ are observations sampled from $\rho_\tau$ and $\rho_E$, respectively. In practice, the function $k(\cdot, \cdot)$ in Eq.~\eqref{equ:MMD_traj} is often defined as:
\begin{equation}
k(x,y) = K\left(g(x), g(y)\right).
\end{equation}

Formally, the function $g$ gives us the feasibility that adapts the focus of the ${\rm MMD}$ metric for different aspects to different downstream tasks. In our experiments, we define the distance $D(\tau, \mathcal{M}_E)$ from trajectory to the demonstration dataset $\mathcal{M}_E$ as follows:
\begin{equation}
    D(\tau, \mathcal{M}_E) = \min_{\tau_E\in\mathcal{M}_E}\mathrm{MMD}^2(\tau,\tau_E,\mathcal{H}).
\end{equation}

This definition of $D(\tau, \mathcal{M}_E)$ can only concern a relevant subset of the information in state observations. For example, we choose this information subset to be the coordinates $c=(x,y)$ of its center of mass (CoM) in the Key-Door-Treasure domain, i.e., the function $g$ maps a state observation $o$ to $c=(x,y)$. 

\subsection{Trajectory Importance Evaluation Mechanism}
\label{sec:guidance_reward_design}
This section defines the trajectory importance based on a novel trajectory-level distance measurement and the trajectory return. In this manner, this importance integrates the state distribution information of the demonstrations with the reward signals of the trajectories. We can then fuse these two pieces of information into the guidance reward by a smooth weighted average of the trajectory importance. Specifically, we introduce a trajectory weight function $\omega_{\pi}(\tau\vert \mathcal{M}_E)$ in Eq.~\eqref{eq:omega} based on the MMD distance metric in the trajectory space and the conditional distribution $p_\pi(\tau\vert x)$. Mathematically, an exponential function $e^{-kd}$ is adopted by $\omega_\pi(\cdot\vert\mathcal{M}_E)$ to smooth the weight of the trajectory, and $\omega_\pi(\cdot\vert\mathcal{M}_E)$ is expressed as follows:
\begin{equation}
  \label{eq:omega}
  \omega(\tau\vert \mathcal{M}_E) := \frac{e^{-kd(\tau)}}{\int_\upsilon p_\pi(\upsilon)e^{-kd(\upsilon)}\mathrm{d}\upsilon},
\end{equation}
where $d(\cdot)=D(\cdot, \mathcal{M}_E)$, $\tau$ and $\upsilon$ have the same meanings as those in Eq.~\eqref{eq:x_traj_distribution}, and $p_\pi(\cdot)$ is the trajectory distribution defined in Eq.~\eqref{eq:p_pi}. $k$ is a predefined positive constant to adjust the value of $\omega$. Specifically, by choosing the suitable value of $k$, we can smooth the value of $\omega$ and prevent its value corresponding to a certain trajectory from being too large or too small. Similar to $p_\pi(\cdot)$, $\omega(\cdot\vert\mathcal{M}_E)$ can be considered as a conditional probability density function over the trajectory space. $\omega(\cdot\vert\mathcal{M}_E)$ satisfies the conditions of probability distribution $\omega_{\pi}(\tau\vert\mathcal{M}_E)\ge 0, \forall\tau$ and $\int_\tau p_\pi(\tau)\omega(\tau\vert\mathcal{M}_E)\mathrm{d}\tau=1$. According to $\omega_{\pi}(\tau\vert \mathcal{M}_E)$, a trajectory closer to $\mathcal{M}_E$ is granted more weight by $\omega_\pi(\cdot\vert\mathcal{M}_E)$. Based on the trajectory weight function $\omega_{\pi}(\tau\vert \mathcal{M}_E)$, we can define the trajectory importance as follows:
\begin{equation}
    \label{eq:traj_importance}
    I(\tau) = \omega(\tau\vert\mathcal{M}_E)R_{j}(\tau),
\end{equation}
where the joint return $R_{j}$ is written as:
\begin{equation}
  \label{eq:R_j}
  R_{j}(\tau)=\alpha R(\tau)+\beta R(\tau_E).
\end{equation}

Here, $R(\cdot)$ denotes the return value of the trajectory. $\tau_E = \operatorname{arg\,min}_{\tau_E\in\mathcal{M}_E}\mathrm{MMD}^2(\tau,\tau_E,\mathcal{H})$ is the state-only demonstration in $\mathcal{M}_E$ closest to $\tau$ under the MMD distance metric, and $\mathcal{H}$ a reproducing kernel Hilbert space as described in Section~\ref{sec:backgroung}. The two constants $\alpha$ and $\beta$ in Eq.~\eqref{eq:R_j} are non-negative, and their linear sum is $1$, i.e., satisfies the condition $\alpha+\beta=1$. In this manner, the importance of the trajectory integrates the state distribution information of the demonstrations with the returns of the trajectories. 

\subsection{Smooth Guidance Reward Computation Technology}
In this section, we define the smooth guidance rewards for a state-action pair $(s, a)$ as follows:
\begin{equation}
  \label{eq:guidance_reward}
  r_{i}(s, a) = \int_\tau p_\pi(\tau\vert x)I(\tau) \mathrm{d}\tau.
\end{equation}
Here, $p_\pi(\cdot\vert x)$ is the probability density model for trajectories including $x$ defined in Eq.~\eqref{eq:x_traj_distribution}. Note that $x = f(s)$. The guidance reward $r_{i}(s, a)$ allocated to each state-action pair is the expectation of the product of $R_{j}(\tau)$ and $\omega(\tau\vert\mathcal{M}_E)$ under the conditional distribution $p_\pi(\tau\vert x)$. The joint return $R_{j}(\tau)$ is the weighted average sum of the environmental returns of $\tau$ and $\tau_E$. Moreover, we use $\omega(\tau\vert\mathcal{M}_E)$ to adjust its contribution to the guidance reward $r_{i}(s, a)$ according to Eq.~\eqref{eq:traj_importance}. Specifically, the closer the trajectory is from $\mathcal{M}_E$, the larger the weight of $\tau$.  Therefore, such a trajectory will contribute more to the smooth guidance reward. A new smoothed RL objective is obtained by inserting Eq.~\eqref{eq:guidance_reward} in the standard RL objective:
\begin{equation}
  \label{eq:smooth_objective}
  \tilde{\eta}(\pi_\theta) = \mathbb{E}_{\tau\sim\pi_\theta}\left[\sum_{t=0}^\infty\gamma^t r_{i}(s_t, a_t)\right].
\end{equation}

This is similar to the standard RL objective function in Eq.~\eqref{eq:rl_objective}, albeit with the guidance reward different from the environmental rewards. By maximizing this objective, the proposed method encourages the agent to revisit the hopeful region of the state space. In this manner, the state-action distribution of the current policy will gradually align with that of the demonstration trajectories. Hence, the agent can finally generate highly rewarded trajectories similar to the demonstration trajectories in the replay memory. Furthermore, experimental results indicate that determining the temporal structure of the sequential decision problem often results in high sample efficiency.

\begin{remark}
  Given a trajectory with the outcome, how can we determine the relevance of each state-action pair toward achieving this outcome in tasks with sparse or delayed rewards? This is the central topic that temporal credit assignment copes with. Our method can potentially perform long-term credit assignments. Inspired by LfD methods, we propose to realize this goal by using state-only demonstration data in sparse reward settings. Instead of learning complex models to reveal the relevance of action to the future result or adopting contribution analysis methods to achieve return redistribution, our method learns a guidance reward function to assist policy optimization. 
  
  Our method can be regarded as a simple redistribution method - the guidance reward for each state-action pair is obtained by calculating the weighted average of the trajectory returns based on the MMD distance between trajectories. In particular, this design of smooth guidance rewards considers the state distribution information of demonstrations and the trajectory returns simultaneously. More importantly, our method does not require obtaining trajectories with sparse rewards before performing credit assignments, which expands the application scope of CA approaches and provides a new idea to CA.
\end{remark}

\subsection{Algorithm Implementations for Discrete and Continuous Control Tasks}
In practice, without access to the true MDP dynamics, solving Eq.~\eqref{eq:guidance_reward} to obtain exact guidance rewards is infeasible. Fundamentally, $r_i(s, a)$ is the weighted expectation of joint rewards $R_j$ under the trajectory weight function $\omega(\tau\vert\mathcal{M}_E)$. Hence, we resort to the Monte Carlo (MC) method to calculate the estimation of the guidance reward: $r_{i}(s, a) = \mathbb{E}_{\tau\sim p_\pi(\tau\vert x)}\left[\omega(\tau\vert\mathcal{M}_E) R_{j}(\tau)\right]$. Let $\mathcal{B}$ denote a trajectory buffer generated by the current policy $\pi$ in the MDP. Then, the MC estimation of the guidance reward $r_{i}(s, a)$ can be given as:
\begin{equation}
  \label{eq:guidance_expectation}
  \hat{r}_{i}(s, a) = \frac{1}{N(x)}\sum_{\tau\in\mathcal{B}}\hat{\omega}(\tau\vert\mathcal{M}_E)R_{j}(\tau)\mathbb{I}[x\in f(\tau)]. 
\end{equation}
Here, $x=f(s)$, $N(x)$ is the number of trajectories in the buffer $\mathcal{B}$ satisfying $x\in f(\tau)$, and $\mathbb{I}$ is the indicator function. $\hat{\omega}(\tau\vert\mathcal{M}_E)$ is also estimated by the MC method and is written as: 
\begin{equation}
  \label{eq:omega_expectation}
  \hat{\omega}(\tau\vert \mathcal{M}_E) := \frac{e^{-kd(\tau)}}{\sum_{\upsilon\in\mathcal{B}}e^{-kd(\upsilon)}+\epsilon},
\end{equation}
where $d(\cdot)=D_{\rm MMD}(\cdot, \mathcal{M}_E)$ is introduced in Section~\ref{sec:backgroung}, $k$ and $\epsilon$ are predefined positive constants.

We can utilize the current policy $\pi$ to maintain buffer $\mathcal{B}$ and estimate the guidance rewards along with the computation of the policy gradient, which is practicable under the current RL framework. This guidance reward module can be embedded into any existing RL methods and helps them optimize a policy effectively in sparse reward settings. However, one potential issue of this simple approach is that it is challenging to collect adequate data, such that estimating the guidance reward converges in the high-dimensional state spaces. Perhaps more importantly, eagerly satisfying this requirement is typically needless and pointless.

For example, in a given 2D grid world shown in Fig.~\ref{fig:kdt_maze}, the goal position is at the up-right of the starting location of the agent. A handful of previous good demonstration trajectories are provided with the agent ahead of time. Then, the proposed method can be used to approximate the value of the guidance reward with trajectories collected in each iteration. The estimation may be imprecise; however, we only ensure that lower guidance rewards are received by state-action pairs on trajectories far away from $\mathcal{M}_E$. In this manner, the agent is encouraged to approach the regions where state-only demonstration trajectories of $\mathcal{M}_E$ stretch. Therefore, our method avoids additional sampling for achieving stationary credit assignments and does not incur high computational costs compared to standard RL algorithms.  Furthermore, we exploit past experiences gathered by the agent during training to update the previous good trajectory memory, which can help establish a more reliable guidance reward estimator.


\subsubsection{Applying to high-dimensional continuous spaces.}
When the state-action space of the environment the agent faces is high-dimensional and continuous, the proposed approach will encounter many thorny problems. Specifically, because of the continuity of the state-action space, the expectation in Eq.~\eqref{eq:guidance_reward} cannot be estimated accurately, even if we store the previous trajectories generated by the agent during the training process. Some necessary modifications must be made to scale the proposed approach to the complex environment. In Algorithm~\ref{algo:algo2}, we summarize PPO with POSG to obtain an algorithm example that can be applied to the high-dimensional continuous task. The smooth guidance reward computed with only a single trajectory is stored in the trajectory buffer. We then use this single smooth guidance reward to estimate the policy gradient for the policy parameter update. Mathematically, POSG degenerates to the Monte-Carlo estimation of the smooth guidance reward using a single trajectory rather than using many samples from the policy to estimate the expected credit assignment. This is not a noticeable problem in practice if the agent policy is parameterized by a deep neural network (DNN) since DNN tends to generalize well within the vicinity of the input data. Furthermore, experimental results indicate that Algorithm~\ref{algo:algo2} achieves competitive and superior performance in various high-dimensional tasks.

\section{Theoretical Analysis}
\label{sec:thm}
This section provides the theoretical foundation of POSG and illustrates its advantages in policy performance improvement. We analyze the performance improvement bound caused by smooth guidance rewards and derive a worst-case lower bound on the performance improvement. This result guarantees performance improvement and demonstrates the effectiveness of the proposed guidance rewards at a theoretical level. 

Consider that the trajectories in $\mathcal{M}_E$ are optimal, and $\pi_b$ represents a behavior policy implied by the trajectory data set $\mathcal{M}_E$. Then, due to the optimality of trajectories in $\mathcal{M}_E$, it can be viewed as an expert policy with high returns. Therefore, we expect the training strategy to behave like $\pi_b$ and obtain a higher return. To achieve this goal, we propose to design a smooth guidance reward function $r_i(s, a)$ to assist policy optimization and help the agent policy gradually converge to $\pi_b$ under the current paradigm of RL. The following lemma illustrates that this guidance reward is an available reward function to ensure $\pi_b$ is the optimal policy. It is worth mentioning that, in implementation, the trajectories in $\mathcal{M}_E$ may not be optimal at the beginning of policy optimization, and $\mathcal{M}_E$ can be updated with the highly rewarded trajectories generated during the training process.
\begin{restatable}{lemma}{OptimalLem}
  \label{lemma:1}
  Suppose $\pi_b$ is a policy implied by the replay memory $\mathcal{M}_E$ that contains all optimal trajectories, and $p(s^\prime|s) = \pi_b(a|s)P(s^\prime|s, a)$ is the state transition function consistent with $\mathcal{M}_E$. Let the discount factor $\lambda$ be 1. According to the definition of the smooth reward $r_i(s, a)$, if the current policy $\pi$ is expressed as:
  \begin{equation}
      \pi(a\vert s) = \left\{
      \begin{aligned}
           \pi_b(a\vert s),\quad  &\text{if} \  (s,a)\in\operatorname{supp}(\pi_b),\\
           0,\quad  &\text{else.}
      \end{aligned}
      \right.
  \end{equation}
  Then, $\pi$ is the optimal policy with the highest entropy under the smooth guidance reward $r_i(\cdot,\cdot)$ when the time horizon $T$ of MDP is finite.
\end{restatable}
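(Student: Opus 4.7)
The plan is to establish two sub-claims: first, that $\pi = \pi_b$ attains the maximum of $\tilde{\eta}(\pi) = \mathbb{E}_{\tau\sim\pi}[\sum_{t=0}^{T} r_i(s_t,a_t)]$ from Eq.~\eqref{eq:smooth_objective}; second, that among all policies that attain this maximum, $\pi_b$ has the highest entropy. A key subtlety is that $r_i$ itself depends on $\pi$ through $p_\pi$ in Eqs.~\eqref{eq:omega}--\eqref{eq:guidance_reward}, so the problem is a fixed-point optimization rather than optimization against a static reward, and the proof must respect this self-referential structure.

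For the optimality sub-claim, I would first derive a pointwise upper bound on the integrand $\omega(\tau\vert\mathcal{M}_E)\,R_j(\tau)$ that defines $r_i$. Since $d(\tau) = D(\tau,\mathcal{M}_E)\geq 0$ and $\omega(\tau\vert\mathcal{M}_E)\propto e^{-kd(\tau)}$ is monotonically decreasing in $d(\tau)$, the weight is maximized precisely when $d(\tau)=0$, i.e., when $\tau$ matches the state distribution of some $\tau_E\in\mathcal{M}_E$. The optimality hypothesis on $\mathcal{M}_E$ gives $R(\tau)\leq R(\tau_E)=R^*_{\max}$ and hence $R_j(\tau)\leq R^*_{\max}$, so the integrand attains its global maximum exactly on trajectories coinciding with demonstrations. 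Under $\pi=\pi_b$, every sampled $\tau$ lies in $\operatorname{supp}(\pi_b)$ and thus matches the state visitation of $\mathcal{M}_E$; hence $d(\tau)=0$, the normalized $\omega$ equals $1$, and $R_j(\tau)=R^*_{\max}$, so $r_i(s,a)$ achieves its upper bound at every visited $(s,a)$, and $\tilde{\eta}(\pi_b)$ is maximized. Conversely, any policy $\pi'$ placing positive mass outside $\operatorname{supp}(\pi_b)$ strictly increases $d(\tau)$ or strictly decreases $R(\tau)$ on a set of trajectories of positive measure, yielding $\tilde{\eta}(\pi') < \tilde{\eta}(\pi_b)$.

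For the entropy sub-claim, any other optimal policy must confine its action support at each reachable state to $\operatorname{supp}(\pi_b(\cdot\vert s))$. Using the chain-rule decomposition of the trajectory distribution in Eq.~\eqref{eq:p_pi}, the policy-dependent part of the trajectory entropy reduces to $\sum_{t=0}^{T-1}\mathbb{E}_{s_t\sim d_\pi}[H(\pi(\cdot\vert s_t))]$; subject to the support constraint and the state-occupancy consistency induced by the transition $p(s'\vert s)=\pi_b(a\vert s)P(s'\vert s,a)$, each conditional entropy is maximized precisely by matching the empirical action frequencies in $\mathcal{M}_E$, which is by definition $\pi_b(\cdot\vert s)$. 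The main obstacle will be this second step: fixing the meaning of ``highest entropy'' (I would adopt the trajectory entropy $-\mathbb{E}_\tau[\log p_\pi(\tau)]$) and rigorously handling the fixed-point dependence of $r_i$ on $\pi$, so that the monotonicity arguments used to rule out suboptimal deviations remain valid when the reward functional itself is re-evaluated against each competing policy rather than held fixed at $p_{\pi_b}$.
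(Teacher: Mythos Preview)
Your proposal follows essentially the same route as the paper: bound the integrand $\omega(\tau\vert\mathcal{M}_E)R_j(\tau)$ pointwise by observing that $\omega$ is maximized at $d(\tau)=0$ and $R_j(\tau)\le R^*_{\max}$, conclude that $\tilde\eta$ is maximized exactly when the policy reproduces the trajectories in $\mathcal{M}_E$, and then invoke the highest-entropy criterion to single out $\pi=\pi_b$. If anything you are more careful than the paper, which does not explicitly address the self-referential dependence of $r_i$ on $\pi$ or spell out what ``highest entropy'' means; the paper's entropy step is a one-line assertion, so your flagged obstacle is real but is handled no more rigorously in the original.
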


An intuitive description is provided in the following remark to explain this assumption further.
\begin{remark}
  Lemma~\ref{lemma:1} gives a general explanation for the property of smooth guidance reward. The agent may rarely receive sufficient reward signals in the initial phase of policy optimization in environments with sparse rewards. Hence, it remains challenging for the agent to compute the gradient information accurately. According to this lemma, the smooth guidance reward function can provide dense reward feedback for policy optimization and encourage the agent to approach $\pi_b$ gradually, accelerating the agent's learning.
\end{remark}

Next, a deeper look at the proposed POSG algorithm is given to demonstrate its effectiveness in improving control performance. We quantitatively analyze the performance improvement bound in each iteration. The result obtained in this study indicates that POSG has a solid performance improvement guarantee compared to ordinary DRL algorithms. This conclusion relates the theoretical analysis of POSG with the results in previous works~\cite{kakade2002approximately, Schulman2015TrustRP, achiam2017constrained}. 

The performance bound connects the expected difference in the total return earned from environmental and guidance rewards to an average divergence between the new and old policies. Although the policy parameters are optimized with environmental and guidance rewards, the performance bounds are only related to environmental rewards. To this end, we assume that the behavior policy $\pi_b$ and smooth guidance reward satisfy the following requirement.
%
  %
%
\begin{restatable}{assumption}{ass}
    \label{ass:smooth_rewards}
    Let $A_e$ and $A_i$ be the advantage functions computed with environmental rewards $r_e$ and smooth guidance rewards $r_i$, respectively. Then, for any state-action pair $(s, a)$, there exists a constant $\lambda>0$, such that the following equation holds:
    \begin{equation}
    \label{eq:ass_A_i}
    \begin{aligned}
        A_i(s, a) &\ge \lambda A_e(s, a), \\
        \max_s \left\vert \mathbb{E}_{a \sim \pi^\prime} [A_i(s,a)] \right\vert &\le \lambda\max_s \left\vert \mathbb{E}_{a \sim \pi^\prime} [A_e(s,a)] \right\vert.
    \end{aligned}
    \end{equation}
\end{restatable}

\begin{remark}
    Temporal credit assignment refers to the problem of measuring the relevance of each state-action pair to future rewards. Various approaches have been proposed to design different models that estimate the influence of actions on future returns. POSG can be viewed as a simple credit assignment method - the influence of a state-action pair is a weighted average of the trajectory importance. Furthermore, as described in Lemma~\ref{lemma:1}, the smooth guidance reward is an available reward function to ensure $\pi_b$ is optimal. Therefore, we further consider that for each state-action pair, the advantage computed with the smooth guidance reward positively correlates with the environmental advantage to some extent. 
\end{remark}

Next, we consider the performance improvement bound in each iteration obtained by POSG. 
\begin{restatable}{theorem}{mainthm}[Performance Improvement Bound]
    \label{thm:bound}
    For any policies $\pi$ and $\pi^\prime$, let $r_i(s,a)$ be the smooth guidance reward defined in Eq.~\eqref{eq:guidance_reward}, and let $A_e(s,a)$ be the advantage function of the environmental reward, 
    \begin{equation*}
        \epsilon_{\pi^\prime} \doteq \max_s \left\vert \mathbb{E}_{a \sim \pi^\prime} [A_e(s,a)] \right\vert, 
    \end{equation*}
    \begin{equation*}
        T_{\pi}(\pi^\prime) \doteq \underset{\begin{subarray}{c} s\sim d^\pi\\ a\sim\pi\end{subarray}}{\mathbb{E}}\left[A_e(s,a)\right] \text{ and }
    \end{equation*}
    \begin{equation*}
        D_{\pi} (\pi') \doteq  \frac{T_{\pi} (\pi')}{1-\gamma} - \frac{2\gamma \epsilon_{\pi'}}{(1-\gamma)^2} \underset{s \sim d^{\pi}}{\mathbb{E}} \left[ D_{\rm TV} (\pi'\parallel\pi)[s] \right],
    \end{equation*}
    where $\gamma$ is the discount factor. $D_{\rm TV}(\pi'\parallel \pi)[s] = \frac{1}{2}\sum_a \left\vert \pi'(a\vert s) - \pi(a\vert s) \right\vert$ is used to represent the total variational divergence between two action distributions of $\pi$ and $\pi^\prime$ when the state is $s$. The following bounds hold:
    \begin{equation}
        \eta(\theta^\prime) - \eta(\theta) \geq (1+\lambda)D_{\pi} (\pi'), 
        \label{eq:bound1}
    \end{equation} 
    %
    where $\lambda$ is introduced in Assumption~\ref{ass:smooth_rewards}.
\end{restatable}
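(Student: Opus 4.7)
My plan is to apply the CPO-style trajectory coupling of Achiam et al.~\cite{achiam2017constrained} (following the TRPO argument in \cite{Schulman2015TrustRP}) to the \emph{combined} reward $r_e+r_i$, and then to use both halves of Assumption~\ref{ass:smooth_rewards} to replace the resulting combined quantities with the environmental counterparts $T_{\pi}(\pi')$ and $\epsilon_{\pi'}$ appearing in $D_{\pi}(\pi')$. Throughout I read ``$a\sim\pi$'' in the definition of $T_\pi(\pi')$ as ``$a\sim\pi'$'' (the standard TRPO/CPO convention; literally as stated the quantity is identically zero), and I interpret $\eta(\theta)$ on the left-hand side of \eqref{eq:bound1} as the POSG-effective return $\eta+\tilde{\eta}$ that POSG actually optimizes, since under the pure environmental reading the inequality cannot hold for arbitrary $\pi,\pi'$.

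First, setting $A^{\rm t}:=A_e+A_i$ and $\epsilon^{\rm t}_{\pi'}:=\max_s|\mathbb{E}_{a\sim\pi'}A^{\rm t}(s,a)|$, I would start from the Kakade--Langford performance difference lemma applied to the combined reward and then reproduce the CPO coupling that trades $d^{\pi'}$ for $d^{\pi}$ at the cost of a total-variation penalty. This yields
\begin{equation*}
\eta^{\rm t}(\pi')-\eta^{\rm t}(\pi)\;\geq\;\frac{1}{1-\gamma}\underset{\begin{subarray}{c}s\sim d^\pi\\ a\sim\pi'\end{subarray}}{\mathbb{E}}\bigl[A^{\rm t}(s,a)\bigr]\;-\;\frac{2\gamma\,\epsilon^{\rm t}_{\pi'}}{(1-\gamma)^{2}}\underset{s\sim d^\pi}{\mathbb{E}}\bigl[D_{\rm TV}(\pi'\parallel\pi)[s]\bigr].
\end{equation*}

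Next I would invoke Assumption~\ref{ass:smooth_rewards} twice. Its pointwise inequality $A_i\geq\lambda A_e$ upgrades to $A^{\rm t}\geq(1+\lambda)A_e$ pointwise, and integrating against $d^{\pi}\otimes\pi'$ yields $\mathbb{E}_{s\sim d^\pi,\,a\sim\pi'}[A^{\rm t}]\geq(1+\lambda)T_{\pi}(\pi')$. For the error coefficient I would combine the elementary bound $\max_s|f(s)+g(s)|\leq\max_s|f(s)|+\max_s|g(s)|$ with $f(s)=\mathbb{E}_{a\sim\pi'}A_e(s,a)$ and $g(s)=\mathbb{E}_{a\sim\pi'}A_i(s,a)$, together with the assumption's second inequality $\max_s|\mathbb{E}_{a\sim\pi'}A_i|\leq\lambda\epsilon_{\pi'}$, to obtain $\epsilon^{\rm t}_{\pi'}\leq(1+\lambda)\epsilon_{\pi'}$. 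Because the advantage term enters with a positive sign (so the lower bound is preserved) and the penalty with a negative sign (so the upper bound on $\epsilon^{\rm t}_{\pi'}$ is also preserved), both reductions feed into the coupling bound in the right direction; the factor $(1+\lambda)$ pulls out cleanly and the inequality collapses to $\eta(\theta')-\eta(\theta)\geq(1+\lambda)D_{\pi}(\pi')$.

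The main obstacle I anticipate is the reproof of the CPO coupling step for the combined reward: the argument in \cite{achiam2017constrained} is essentially unchanged, but one must verify carefully that the constant appearing in the penalty is indeed $\epsilon^{\rm t}_{\pi'}=\max_s|\mathbb{E}_{a\sim\pi'}A^{\rm t}|$ (and not, e.g., $\max_{s,a}|A^{\rm t}|$), because only with this precise form does the second inequality in Assumption~\ref{ass:smooth_rewards} fit into the argument without losing a constant. A secondary, purely notational subtlety is to confirm that the $\eta$ on the left of \eqref{eq:bound1} silently denotes $\eta+\tilde{\eta}$; otherwise the $(1+\lambda)$ factor is unjustified for arbitrary $(\pi,\pi')$ when $D_{\pi}(\pi')>0$.
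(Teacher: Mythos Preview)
Your proposal is correct and uses the same two ingredients as the paper (the CPO/Achiam et al.\ bound plus Assumption~\ref{ass:smooth_rewards}), but the paper organizes them differently. Rather than applying the CPO bound once to the combined reward $r_e+r_i$, the paper applies it \emph{twice}: once with $r_e$ to get $\eta_e(\pi')-\eta_e(\pi)\geq D_\pi(\pi')$, and once with $r_i$ to get $\tilde\eta(\pi')-\tilde\eta(\pi)\geq\tilde D_\pi(\pi')$, where $\tilde D_\pi$ has the same form as $D_\pi$ but with $A_i$ in place of $A_e$. Then Assumption~\ref{ass:smooth_rewards} is used directly (no triangle inequality needed) to show $\tilde D_\pi(\pi')\geq\lambda D_\pi(\pi')$, and the two inequalities are added. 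Your combined-reward route is a bit more compact but requires the extra triangle-inequality step $\epsilon^{\rm t}_{\pi'}\leq\epsilon_{\pi'}+\tilde\epsilon_{\pi'}$; the paper's separate-then-add route avoids that step and mirrors the two-phase gradient update in Algorithms~\ref{algo:algo2} and~\ref{algo:algo1}. Both routes, as you correctly anticipated, require reading the $\eta$ on the left of~\eqref{eq:bound1} as the total $\eta_e+\tilde\eta$; the paper's proof makes this explicit in its final line.
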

\begin{figure*}[ht]
  \centering
  \subfloat[]{
    \includegraphics[width=4.2cm]{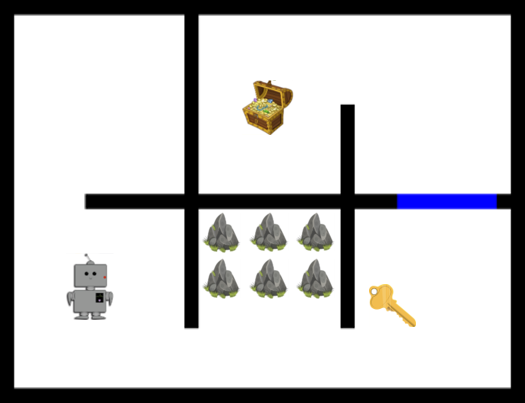}
    \label{fig:kdt_maze}
  }
  \subfloat[]{
    \includegraphics[width=3.3cm]{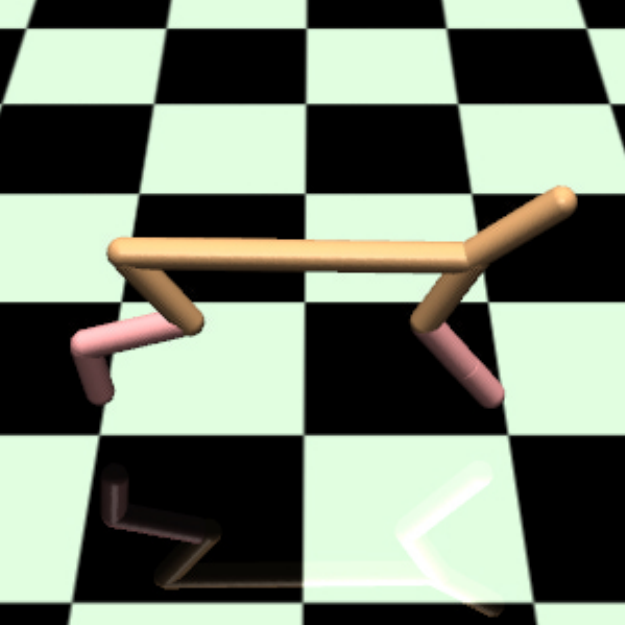}
    \label{fig:halfcheetah_envs}
  }
  \subfloat[]{
    \includegraphics[width=3.3cm]{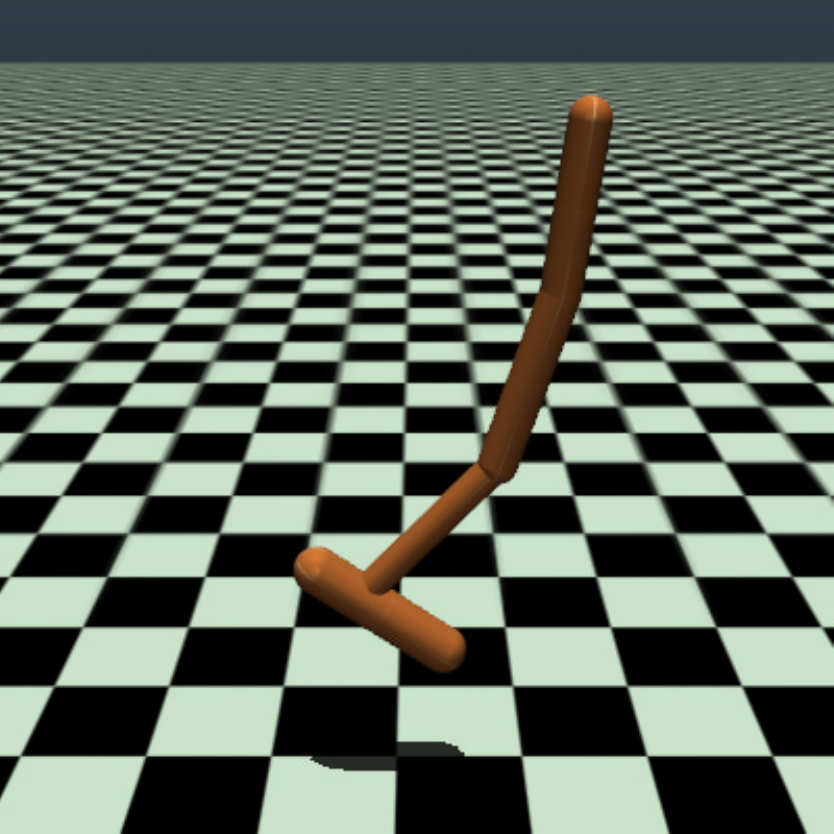}
    \label{fig:hopper_envs}
  }
  \subfloat[]{
    \includegraphics[width=4.2cm]{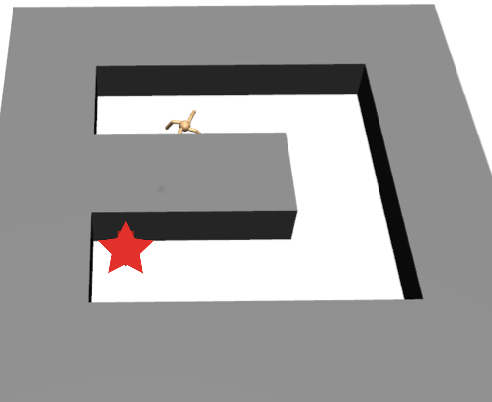}
    \label{fig:ant_maze}
  }
  \caption{
    A collection of environments that we used to evaluate POSG: (a) Key-Door-Treasure domain; (b) SparseHalfCheetah; (c) SparseHopper; (d) Ant Maze.
  }
  \label{fig:envs}
\end{figure*}

\begin{remark}
    Before proceeding, it is worth mentioning that Theorem~\ref{thm:bound} is similar to Corollary 1 of ~\cite{achiam2017constrained}. In the above theorem, a new performance improvement bound $(1+\lambda)D_{\pi,f} (\pi')$ is derived based on the environmental advantage function $A_e(s, a)$ and the smooth guidance reward function $r_i(s, a)$. This result illustrates that the smooth guidance reward can result in a broader performance improvement range with higher upper and lower bounds than that obtained only with the environmental reward. Consequently, the proposed method allows the agent to obtain a larger performance improvement in each policy optimization iteration.
\end{remark}

Note that the bound in Theorem~\ref{thm:bound} we have given is based on TV divergence between policies. For ease of calculation, we further connect the performance bound to KL divergence through Pinsker's inequality~\cite{csiszar2011information}: for two arbitrary distributions $p, q$, the TV and KL divergences satisfy the following in-equation: $D_{\rm TV}(p\parallel q) \le \sqrt{D_{\rm KL}(p\parallel q)/2}$. Combining this with Jensen's in-equation, we obtain
\begin{equation}
\begin{aligned}
    \underset{s \sim d^{\pi}}{\mathbb{E}}\left[D_{\rm TV}(\pi'\parallel\pi)[s]\right]
    &\le \underset{s \sim d^{\pi}}{\mathbb{E}}\left[\sqrt{\frac{1}{2} D_{\rm KL}(\pi'\parallel\pi)[s]}\right]\\
    &\le \sqrt{\frac{1}{2}\underset{s\sim d^{\pi}}{\mathbb{E}}\left[D_{\rm KL}(\pi'\parallel\pi)[s]\right]}.
\end{aligned}
\label{eq:tvkl}
\end{equation}

We obtain the following corollary by combining Eq.~\eqref{eq:tvkl} with the result of Theorem~\ref{thm:bound}.
\begin{restatable}{corollary}{WorstCaseBound}
\label{cor:eta_lower_bound}
    For any policies $\pi, \pi^\prime$ satisfying $\mathbb{E}_{s\sim d^\pi}[D_{\rm KL}(\pi^\prime\parallel\pi)[s]]\le\delta$. Combining Eq.~\eqref{eq:tvkl} with Eq.~\eqref{eq:bound1}, we have:
    \begin{equation}
        \eta(\pi^\prime) - \eta(\pi) \ge \frac{1+\lambda}{1-\gamma}\underset{\begin{subarray}{c}s\sim d^\pi \\ a\sim\pi^\prime\end{subarray}}{\mathbb{E}}\left[A_e(s,a)-\frac{\sqrt{2\delta} \gamma \epsilon_{\pi^\prime}}{1 -\gamma}\right],
        \label{eq:bound_cor1}
    \end{equation}
    where $\lambda$ is introduced in Assumption~\ref{ass:smooth_rewards}.
\end{restatable}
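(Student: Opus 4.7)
The plan is to derive Corollary~\ref{cor:eta_lower_bound} as a direct algebraic consequence of Theorem~\ref{thm:bound} and the Pinsker-plus-Jensen chain already displayed in Eq.~\eqref{eq:tvkl}; no new probabilistic argument is needed. Concretely, I would start from $\eta(\pi^\prime)-\eta(\pi)\ge(1+\lambda)D_\pi(\pi^\prime)$ and unfold $D_\pi(\pi^\prime)$ according to its definition inside Theorem~\ref{thm:bound}, which gives
\begin{equation*}
\eta(\pi^\prime)-\eta(\pi)\ge\frac{1+\lambda}{1-\gamma}T_\pi(\pi^\prime)-\frac{2(1+\lambda)\gamma\epsilon_{\pi^\prime}}{(1-\gamma)^2}\underset{s\sim d^\pi}{\mathbb{E}}\left[D_{\rm TV}(\pi^\prime\parallel\pi)[s]\right].
\end{equation*}

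Next, the KL hypothesis $\mathbb{E}_{s\sim d^\pi}[D_{\rm KL}(\pi^\prime\parallel\pi)[s]]\le\delta$ combined with Eq.~\eqref{eq:tvkl} supplies the upper bound $\mathbb{E}_{s\sim d^\pi}[D_{\rm TV}(\pi^\prime\parallel\pi)[s]]\le\sqrt{\delta/2}$. Because the TV term sits behind a negative coefficient, replacing it by this upper bound produces a valid (weaker) lower bound on $\eta(\pi^\prime)-\eta(\pi)$. After simplifying $2\sqrt{\delta/2}=\sqrt{2\delta}$, factoring $(1+\lambda)/(1-\gamma)$ from both resulting terms, and absorbing the resulting constant $\sqrt{2\delta}\gamma\epsilon_{\pi^\prime}/(1-\gamma)$ into the expectation defining $T_\pi(\pi^\prime)$ (valid since the constant does not depend on $(s,a)$), I would recover the right-hand side of Eq.~\eqref{eq:bound_cor1} verbatim.

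The derivation is almost entirely bookkeeping, so there is no real obstacle; the only points requiring care are monitoring the direction of the inequality when the TV quantity (entering with a minus sign) is replaced by its upper bound, and correctly combining the prefactor $2$ from the definition of $D_\pi(\pi^\prime)$ with the $1/\sqrt{2}$ coming from Pinsker to produce the $\sqrt{2\delta}$ appearing in the final expression. Since Theorem~\ref{thm:bound} already carries the $(1+\lambda)$ factor coming from Assumption~\ref{ass:smooth_rewards} through the argument, the corollary inherits it automatically without further invocation of the smooth-guidance structure.
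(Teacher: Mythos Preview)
Your proposal is correct and matches the paper's own proof, which is a one-line remark that the corollary follows by combining Pinsker's inequality (Eq.~\eqref{eq:tvkl}) with Theorem~\ref{thm:bound}; you have simply written out the bookkeeping that this entails. The only minor wrinkle---not a flaw in your argument---is that the paper's statement of $T_\pi(\pi')$ inside Theorem~\ref{thm:bound} samples $a\sim\pi$ while the corollary's expectation samples $a\sim\pi'$, so when you ``absorb the constant into the expectation defining $T_\pi(\pi')$'' you are implicitly adopting the latter convention (which is the standard one from \cite{achiam2017constrained} and is presumably what was intended).
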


\begin{remark}
    The proposed guidance reward can be viewed as a smooth credit assignment over the trajectory space and characterize the influence and impact of each state-action pair the agent performs on future rewards. When the smooth guidance reward is used to update the current policy, the worst-case lower bound described in this corollary is $1+\lambda$ times the previous result obtained by~\cite{achiam2017constrained}. This result indicates that the proposed method generates a higher performance improvement boundary and provides a more stable performance guarantee in each iteration. Thus, Corollary~\ref{cor:eta_lower_bound} theoretically demonstrates the crucial advantage of POSG in obtaining higher control performances and accelerating training. 
\end{remark}

\section{Experimental Setup}
\label{sec:setup}
\subsection{Environments}
\textbf{Key-Door-Treasure Domain.} We first evaluated the performance of POSG in the Key-Door-Treasure task whose state-action space is discrete, as shown in Fig.~\ref{fig:kdt_maze}. The size of this grid-world environment is $26\times 36$. In each episode, the agent starts from the initial position in the bottom-left room of the Key-Door-Treasure domain. The maximum length of each episode is fixed, and an episode terminates immediately once the agent finds the treasure. The agent only receives a positive reward of 200 when it finds the treasure, and it cannot obtain any reward in other cases. At each time step, the agent is informed of its position information by the environment and chooses a possible action from the action space: \textit{move east, west, south,} and \textit{north}. To reach the location of the treasure, the agent is required to pick up the key (K) to open the door (D) and then travel through the room in the up-right corner to reach the treasure (T). 

\textbf{Locomotion Tasks from MuJoCo.} As shown in Figs.~\ref{fig:halfcheetah_envs} and~\ref{fig:hopper_envs}, we also demonstrated the effectiveness of POSG in several MuJoCo locomotion tasks with continuous state-action spaces. To investigate the potential limitations of POSG in more challenging learning tasks, we modified two classical MuJoCo agents, HalfCheetah and Hopper, and obtained two new agents named SparseHalfCheetah and SparseHopper: These agents yield a forward velocity reward only when the center-of-mass of the robot has already moved towards a certain direction for a threshold distance, and otherwise, the agent cannot obtain any positive reward. The threshold distance is 1 unit for SparseHopper and 10 units for SparseCheetah. At each time step, the agent observes the environment and performs an action sampled according to the policy. The agent receives an energy penalty caused by agent movement to adjust the torque applied to the robot joints. 

\textbf{Hard-Exploration Ant Maze.} 
To further investigate the performance of POSG, we evaluated this algorithm with the ant maze task introduced as the benchmark for RL by~\cite{duan2016benchmarking}, which is depicted in Fig.~\ref{fig:ant_maze}. The hard-exploration property of this task is caused by two factors: continuous locomotion control and navigation in the maze. More specifically, the Ant robot must first learn to walk smoothly and then struggle to reach the target position in the maze. The Ant robot is only rewarded a large positive bonus when it reaches the specified position of the maze. Meanwhile, the episode will end when the agent receives the reward. The state space of this environment consists of two main parts: the agent's internal joint angle information and task-specific attributes. The agent's joint angle information is only determined by the agent's internal state, and the task-specific attributes are mainly obtained from sensor readings, including the positions of walls and goals. 

\subsection{Neural Architectures and Hyper-parameters}
The neural networks were implemented for all tasks with fully connected networks with two layers of 64 hidden units. The discount factor for computing advantages is $0.99$ across all tasks. The policy neural networks were trained in the Key-Door-Treasure domain with a learning rate of $0.000022$. The maximum episode length was 240 steps. In the SparseCheetah task, the step size for neural network parameter optimization is $0.00009$, and the maximum length of each episode was 500. In the SparseHopper task, the learning rate for training policy neural networks was 0.0003, and the maximum length of each episode was 500. In the Ant-Maze environment, the maximum length of each episode was 500. The policy neural networks were optimized with a learning rate of $0.0001$. The maximum episode length was 750 in this task.

\subsection{Baseline Methods}
\label{subsubsec:baselines}
To investigate the benefits of using the smooth guidance rewards in the sparse reward setting, several different baseline methods were used for performance comparison in different tasks. We first compared our method with the state-of-the-art RL methods that learn from state-only demonstrations. To achieve this goal, we adopted GAIfO~\cite{torabi2018generative} as the baseline. However, this method is a state-only imitation learning method, and it does not interact with the environment to gain new experiences during training. For a fair comparison, we combined GAIfO with SIL~\cite{oh2018self} and obtained a new RL method GASIfO. Furthermore, we used several other state-of-the-art RL baselines, including SIL, PPO+D~\cite{libardi2021guided} and generative adversarial self-imitation learning (GASIL)~\cite{guo2018generative}. Unlike the standard SIL and GASIL framework in~\cite{guo2018generative}, our implemented GASIL could obtain the same demonstrations as POSG at the beginning of training. It is worth that these three methods can access the demonstrator's action information. We also ran proximal policy optimization (PPO)~\cite{schulman2017proximal} baseline to verify the reward sparsity. For all baseline approaches, we adopted the parameters that produced the best performance during the parameter search. All performance curves were obtained by averaging over ten separate runs with different random seeds, and the shaded regions represented the standard error over these ten runs.

\subsection{Acquisition of offline state-only demonstrations}
We used the PPO algorithm based on the default OpenAI Gym reward function to train expert policies for each task, shown in the figures as \textbf{expert}. Then, we used these policies to generate expert-level trajectories and only stored the observation and return information. We selected and saved policies learned during training as the medium-level policies named \textbf{medium} in the corresponding figures. We used these policies to generate medium-level trajectories and only preserved the observation and return information.
\begin{figure}[ht]
  \centering
  \subfloat[]{
    \includegraphics[width=4.3cm]{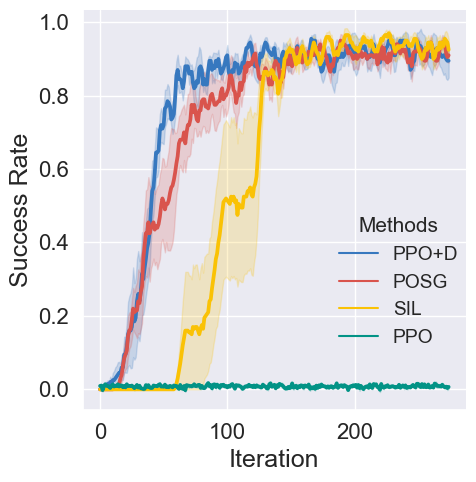}
    \label{fig:kdt_rate}
  }
  \subfloat[]{
    \includegraphics[width=4.3cm]{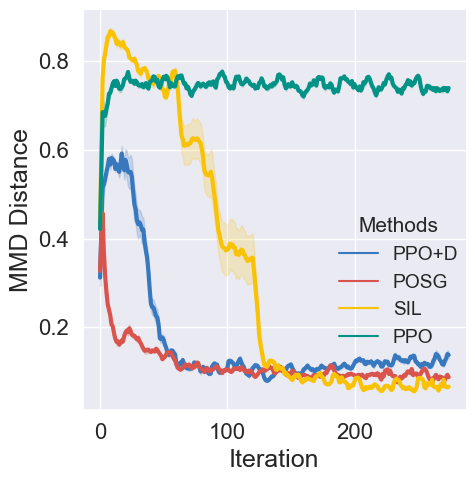}
    \label{fig:kdt_mmd}
  }
  \caption{
    (a) Success rate in the Key-Door-Treasure domain; (b) The changing trend of the MMD distance.
  }
  \label{fig:kdt_results}
\end{figure}

\section{Evaluation of Results}
\label{sec:experience}
In this section, to thoroughly assess the proposed POSG algorithm, we test it in three different classes of environments introduced in Section~\ref{sec:setup}, respectively, and compare the performance with other baseline algorithms. The experimental results demonstrate that POSG can achieve better performance than other methods. 
\begin{figure}[ht]
  \centering
  \subfloat[]{
    \includegraphics[width=3.6cm]{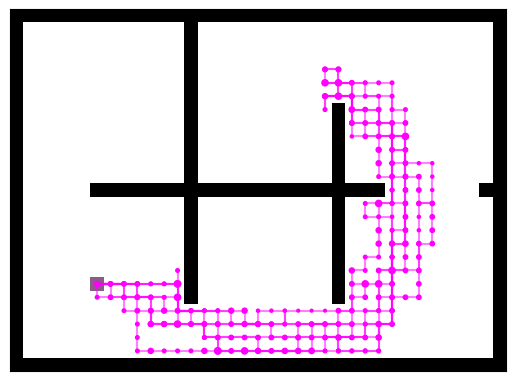}
    \label{fig:demo_trajs}
  }
  \subfloat[]{
    \includegraphics[width=3.6cm]{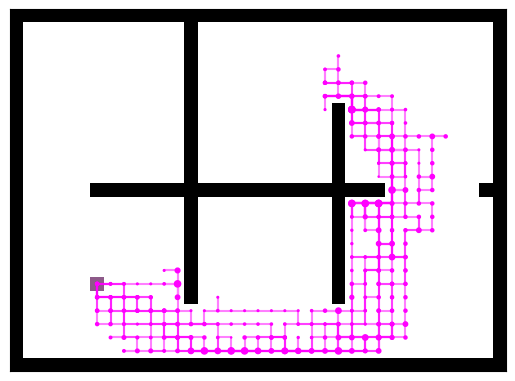}
    \label{fig:posg_trajs}
  }
  \caption{
    (a) State-action visitation graph of demonstrations; (b) State-action visitation graph of the POSG learned policy.
  }
\end{figure}

\subsection{Evaluations in the Key-Door-Treasure domain}
\subsubsection{Performance Comparison with Baseline Methods}
As shown in Fig.~\ref{fig:kdt_rate}, in the key-door-treasure domain, the sequential dependency property results in a low chance of completing all three tasks one by one: picking up the key, opening the door, and obtaining the treasure. We found that the PPO agent tends to get stuck at a sub-optimal policy that only picks up the key in a long time. It is worth noting that, for fairness, the SIL agent is provided with the same demonstration data as POSG at the beginning of policy optimization. The strong baselines, SIL and PPO+D, learn faster than PPO because these agents can obtain extra bonuses from the demonstration data. Specifically, SIL imitates and reproduces past good trajectories by storing experiences in the replay buffer and introducing a novel loss function. The PPO+D algorithm encourages agents to reach the goal quickly by replaying past good experiences. Interestingly, POSG can learn faster and is more stable than SIL, and POSG achieves competitive results against PPO+D in this sparse setting. This result indicates that our method can exploit the demonstrations better and obtain more useful and comprehensive information from these demonstrated trajectories since the smooth guidance reward design considers the demonstration's sparse-reward return and distribution.

We further investigated the changing trend of the MMD distance for all methods during the training process. The changing curves of the MMD distance are depicted in Fig.~\ref{fig:kdt_mmd}. As shown in Fig.~\ref{fig:kdt_mmd}, the value of the MMD distance decreases to zero at the end of the training for the POSG, SIL, and PPO+D algorithms. This result indicates that these three approaches acquire the optimal behaviors that a demonstrator recommends. Furthermore, POSG's MMD distance reduction rate is the highest among all methods. This result suggests that POSG avoids excessive meaningless exploration, and the smooth guidance reward provides more efficient policy optimization directions. Meanwhile, we draw the state-action visitation graphs of demonstrations and POSG learned policy in Figs.~\ref{fig:demo_trajs} and~\ref{fig:posg_trajs}, respectively. The comparative result demonstrated that the POSG agent imitates experts' behavior almost perfectly. 

\subsubsection{Visualization Comparisons for Exploration}
To illustrate how POSG explores the state space and achieves superior performance compared to other baseline methods, we plotted the state-visitation counts of all algorithms, depicted in Fig.~\ref{fig:state_visitation}. This figure compares the exploration regions of different agents in the same training phase. More specifically, the figures are obtained in the 36th iteration of policy optimization.
\begin{figure*}[!ht]
  \centering
  \subfloat[]{
    \includegraphics[width=3.6cm]{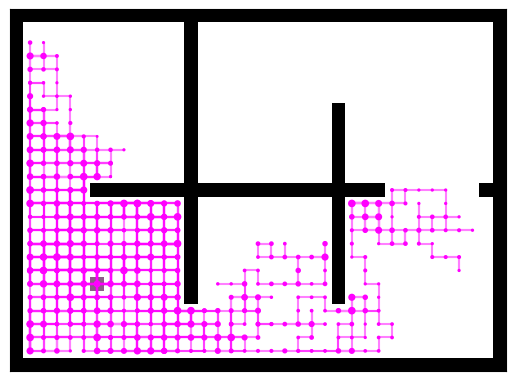}
    \label{fig:PPO}
  }
  \subfloat[]{
    \includegraphics[width=3.6cm]{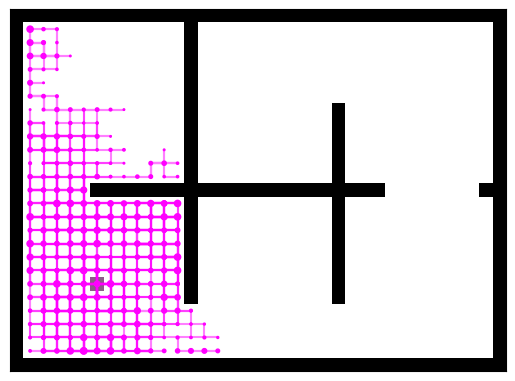}
    \label{fig:PPO+SIL}
  }
  \subfloat[]{
    \includegraphics[width=3.6cm]{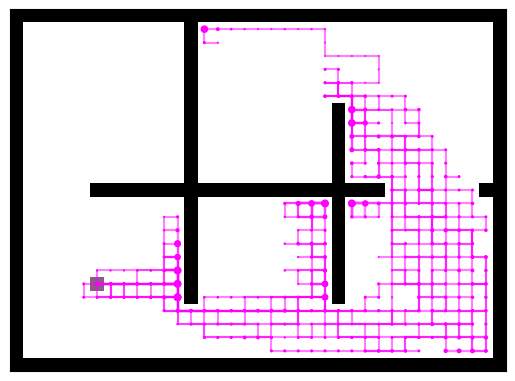}
    \label{fig:PPO+D}
  }
  \subfloat[]{
    \includegraphics[width=3.6cm]{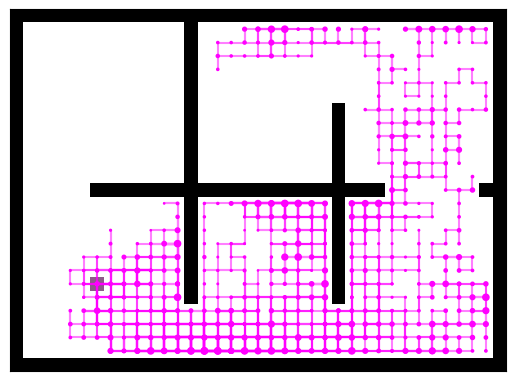}
    \label{fig:POSG}
  }
  \caption{
    The state-action visitation graphs of all algorithms: (a) PPO; (b) SIL; (c) PPO+D; (d) POSG.
  }
  \label{fig:state_visitation}
\end{figure*}

As shown in Fig.~\ref{fig:state_visitation}, the POSG and PPO+D agents can travel further from the initial position, while other agents can only hover around the initial point. Furthermore, the exploration region of the POSG agent is wider than that of the PPO+D agent. It turns out that POSG agents are more focused on the correct direction of exploration, and the exploration efficiency of POSG is higher than that of other baseline methods. This performance difference is caused by the integrated information utilization of the sparse-reward demonstrations, especially the position information. By merging the position information with the return signals, the proposed POSG method orients the agent's policy to softly approach the state-action distribution of sparse-reward demonstrations. Moreover, the return information ensures policy optimization efficiency by considering the temporal structure of MDP. 

\subsection{Performance on MuJoCo Locomotion Tasks}
\subsubsection{Performance Comparisons with Baseline Methods}
This experiment provided sparse-reward demonstrations with agents, and each trajectory contained only one return scalar to suggest its merit. The dense OpenAI Gym reward signals were removed. We used the default OpenAI Gym reward function to measure the quality of actions learned with POSG and other baseline approaches and computed the ground truth returns with this default OpenAI Gym reward function. We compared the ground truth returns of different methods to illustrate the superiority of POSG. 
\begin{figure}[ht]
\centering
\subfloat[]{
  \includegraphics[width=4.3cm]{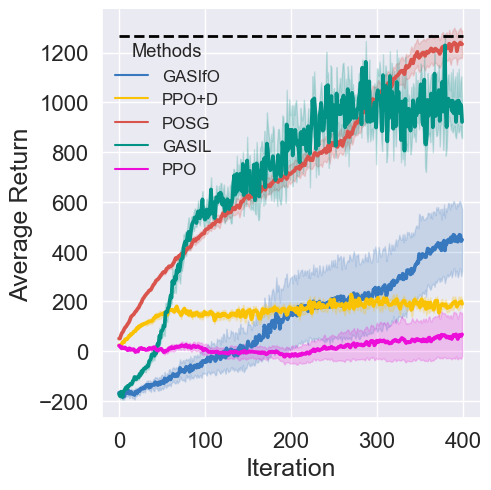}
  \label{fig:cheetah_return}
}
\subfloat[]{
  \includegraphics[width=4.3cm]{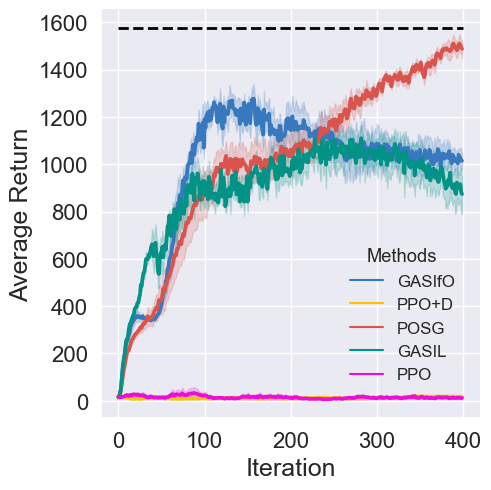}
  \label{fig:hopper_return}
}

\caption{
  (a) Learning curves of average return on the SparseHalfCheetah task; (b) Learning curves of average return on the SparseHopper task. 
}
\label{fig:locomotion_results}
\end{figure}

Figs.~\ref{fig:cheetah_return} and~\ref{fig:hopper_return} show that POSG outperforms other baseline approaches in the locomotion control tasks. Specifically, POSG can converge faster than different algorithms during policy optimization and achieve a higher final return. In contrast, PPO+D policy gradient computation only relies on the advantage information of demonstrations and thus cannot perform effective policy optimization with sparse-reward demonstrations. This causes the performance difference between PPO+D and POSG, and even the average return of PPO+D on the SparseHopper task no longer increases from the beginning of training. Since GASIL and GASIfO ignore reward signals of demonstrations and solely care about the distribution difference between the current policy and the sparse-reward demonstration data, they cannot obtain competitive results compared to POSG. The performance difference between GASIL and GASIfO in Fig.~\ref{fig:cheetah_return} demonstrates the importance of action information for learning. These results show that POSG achieves simple and efficient credit assignments with sparse-reward demonstrations. POSG avoids information bottlenecks by integrating the sparse-reward demonstrations' distribution information with the relevant trajectories' returns.
\begin{figure}[ht]
\centering
\includegraphics[width=6cm]{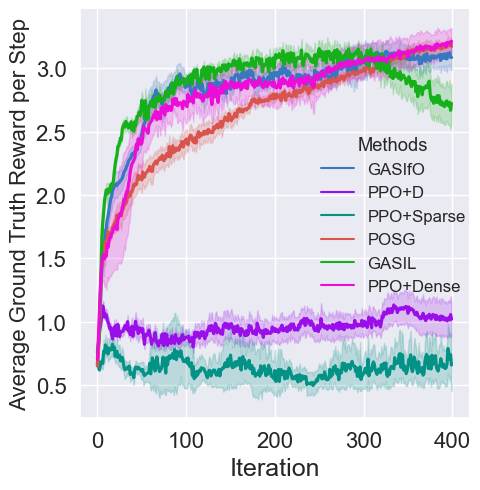}
\caption{
   Learning curves of average ground truth reward per step on the SparseHopper task. The dotted line represents the return value of the demonstration trajectories under the ground truth reward function. 
}
\label{fig:reward_learning}
\end{figure}

\subsubsection{Evaluation of Ground-Truth Reward Learning}
A deeper look into the experimental results is provided to further verify the POSG's effectiveness in learning near-optimal behaviors. In this experiment, for comparison, we trained the PPO algorithm with ground-truth environmental rewards that are the default setting of OpenAI Gym~\cite{openaigym}. We then calculated the average default reward of each step. Meanwhile, we deployed the experiment with POSG and other baseline methods in Fig.~\ref{fig:hopper_return} in the sparse reward version of the same task. The default OpenAI Gym's reward function was used to measure the quality of actions learned with these approaches, and then the average ground-truth reward for each step was computed. We compared the learning curves of the average ground-truth rewards for different algorithms. The experimental results are shown in Fig.~\ref{fig:reward_learning}. 
\begin{figure}[ht]
\centering
\subfloat[]{
  \includegraphics[width=8.5cm]{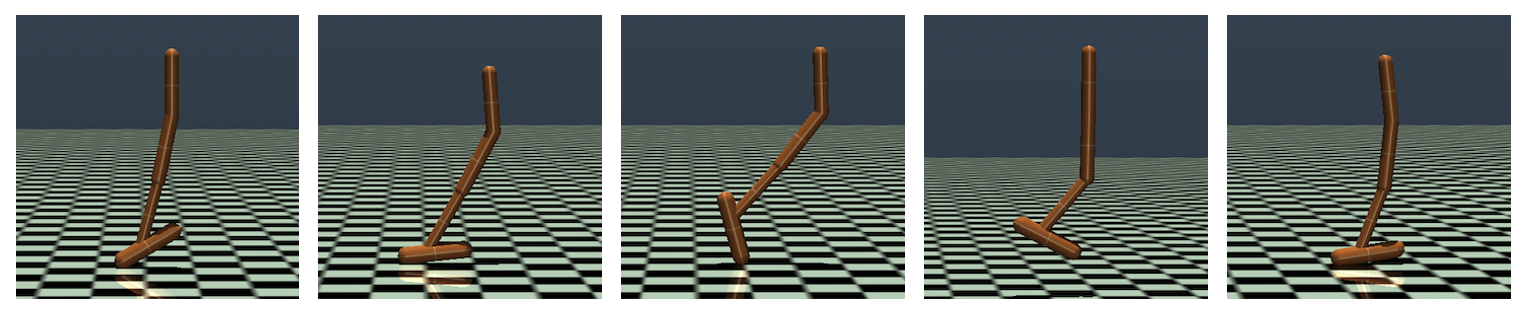}
  \label{fig:cheetah_traj}
}

\subfloat[]{
  \includegraphics[width=8.5cm]{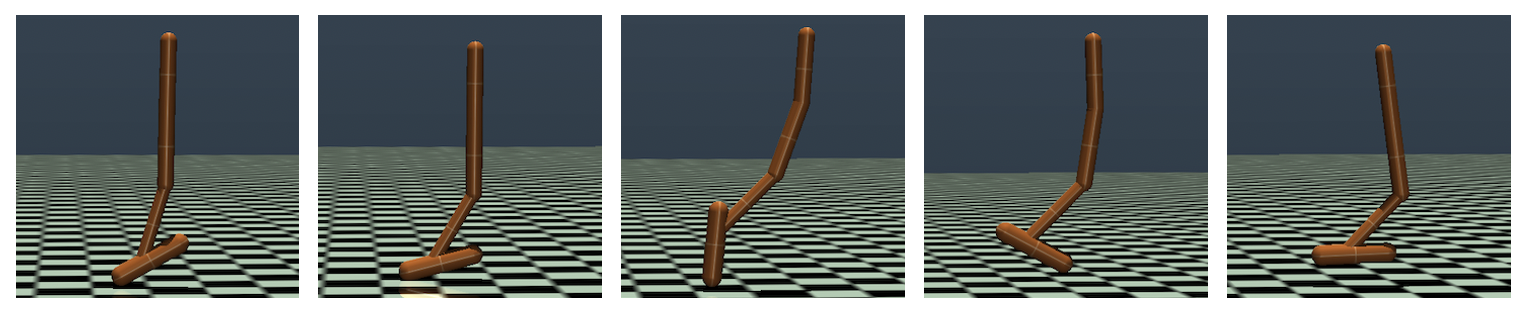}
  \label{fig:hopper_traj}
}
\caption{
  (a) Agent trained by POSG; (b) Agent trained by PPO with the ground-truth reward function. 
}
\label{fig:trajs}
\end{figure}

According to Fig.~\ref{fig:reward_learning}, the POSG average ground-truth reward dramatically increases at the beginning of learning, similar to PPO's learning trend in the default OpenAI Gym reward setting. At the same time, its value achieves an incremental improvement in the process of policy optimization. Moreover, the average ground-truth reward of POSG gradually approximates that of PPO in the default reward setting at the end of training. This result indicates that the smooth guidance reward can help the agent learn near-optimal behaviors that are rewarded a high score by the default OpenAI Gym reward function. For a more intuitive comparison, Fig.~\ref{fig:trajs} shows an example of two policies learned using POSG with sparse rewards and PPO with dense default rewards. Interestingly, the POSG agent has learned gaits similar to those of PPO. This fact further demonstrates the effectiveness of POSG in learning a near-optimal policy in tasks with sparse rewards.
\begin{figure}[ht]
\centering
\subfloat[]{
  \includegraphics[width=4.3cm]{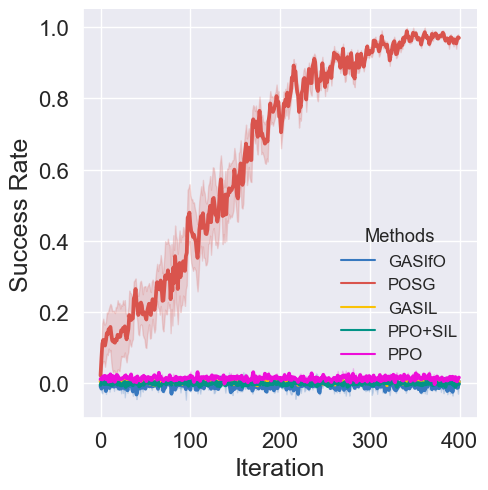}
  \label{fig:ant_rate}
}
\subfloat[]{
  \includegraphics[width=4.3cm]{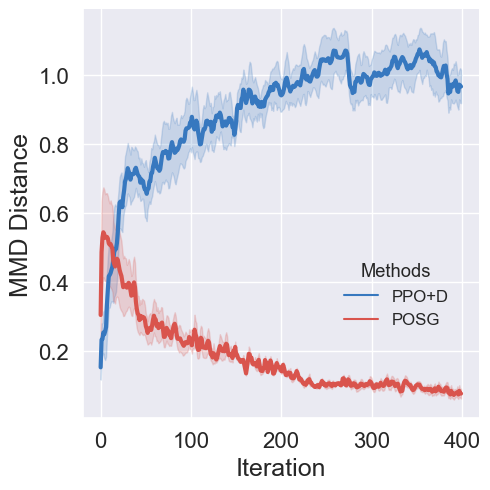}
  \label{fig:ant_mmd}
}
\caption{(a) Learning curves of success rate on the Ant Maze task; (b) Learning curves of average ground truth reward.}
\label{fig:ant_results}
\end{figure}

\subsection{Results in the Ant Maze}
To investigate how effective POSG is across various environments, we evaluated our proposed POSG algorithm on the hard-exploration Ant Maze task. It turns out that our approach significantly outperforms other baseline methods in terms of control performance. As shown in Fig.~\ref{fig:ant_rate}, only our method can reach the goal and learn the optimal policy in the hard-exploration ant maze. Other baseline methods cannot find the sparse reward even if the demonstrations are provided. This result demonstrates that our POSG can better exploit sparse-reward demonstrations by merging the demonstrations' distribution information and reward signals compared with other RLfD methods. Fig.~\ref{fig:ant_mmd} shows the changing curves of the MMD distance in the Ant Maze task. Our POSG method can efficiently decrease the distance between the training policy and sparse-reward demonstrations. Hence, POSG learns expert-like behaviors with demonstrations quickly. In contrast, the PPO+D agent cannot effectively learn the behaviors of demonstrations. Due to the sparsity of the reward function, the agent's policy often deteriorates during the policy optimization process, and the agent always falls near the initial point at the end of training.
\begin{figure}[!ht]
\centering
\subfloat[]{
  \includegraphics[width=4.3cm]{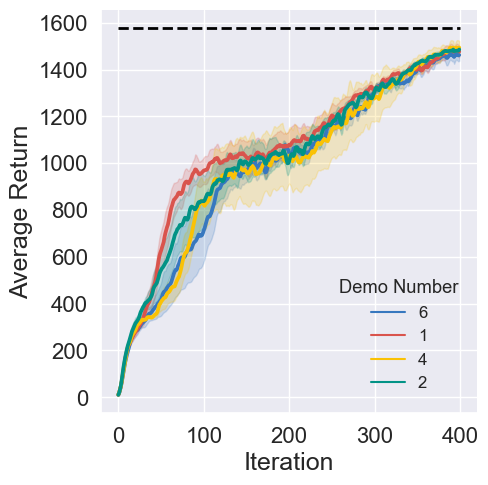}
  \label{fig:ablation_n_ret}
}
\subfloat[]{
  \includegraphics[width=4.3cm]{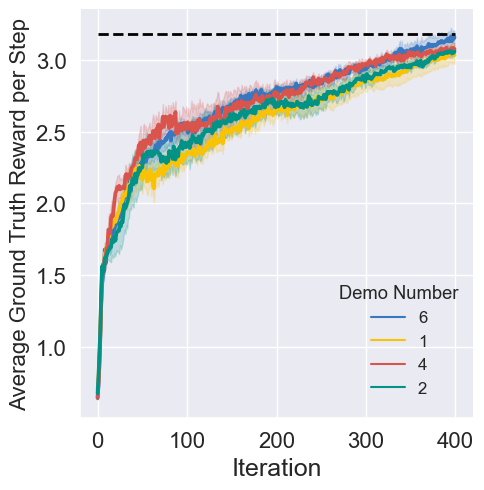}
  \label{fig:ablation_n_rew}
}
\caption{(a) Learning curves of average return; (b) Learning curves of average ground truth reward.}
\label{fig:ablation_number}
\end{figure}

\subsection{Ablation Analyses}
The experimental results described in the previous section indicate that POSG consistently outperforms other baseline approaches on several challenging tasks. We are now interested in whether these advantages still hold when changing the number and quality of sparse-reward demonstrations. We will compare the POSG performance on demonstrations with different amounts and quality to illustrate their impact on the performance results.
\begin{figure}[!ht]
\centering
\subfloat[]{
  \includegraphics[width=4.3cm]{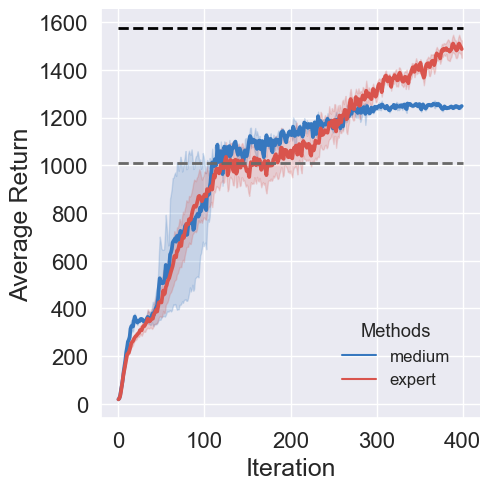}
  \label{fig:ablation_q_ret}
}
\subfloat[]{
  \includegraphics[width=4.3cm]{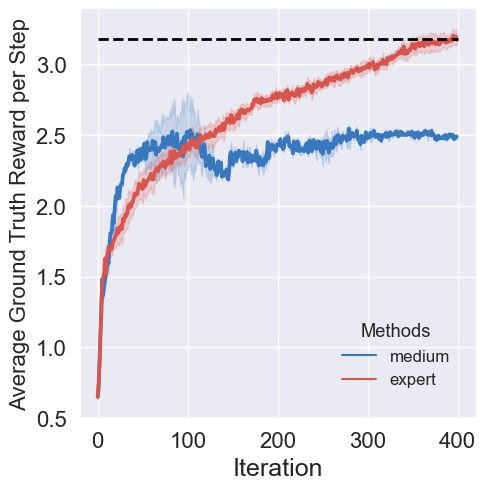}
  \label{fig:ablation_q_rew}
}
\caption{(a) Learning curves of average return; (b) Learning curves of average ground truth reward.}
\label{fig:ablation_quality}
\end{figure}

\subsubsection{Demonstrations with different amounts} 
\label{sec:abla_amount}
We selected five demonstration datasets with different amounts for comparison on the SparseHopper task. Notice that the POSG experiment is conducted with six trajectories of 500 state-action pairs as demonstrations. The black dashed line in Fig.~\ref{fig:ablation_n_ret} represents the expert-level performance. In Fig.~\ref{fig:ablation_n_rew}, The $y$-value corresponding to the dashed line is the value of the average reward per step learned by the PPO algorithm under the ground-truth reward setting. Fig.~\ref{fig:ablation_number} presents the result of this ablation experiment. Our POSG algorithm produces nearly identical results with different numbers of sparse-reward demonstrations. This result suggests that our POSG algorithm can achieve impressive performance using only a few demonstrations, even a single trajectory. Hence, our method decreases the requirement of demonstration quantity, which facilitates the application of the algorithm in real-world scenarios. 

\subsubsection{Demonstrations with different qualities} 
We obtained the demonstration datasets with different qualities by generating and storing the trajectory data in the different training phases. The corresponding result is shown in Fig.~\ref{fig:ablation_quality}. In Fig.~\ref{fig:ablation_q_ret}, the black dashed line represents the expert-level performance, and the gray dashed line represents the medium-level performance. The dashed line in Fig.~\ref{fig:ablation_q_rew} has the same meaning as that in Fig.~\ref{fig:ablation_n_rew}. The average return of POSG increases as the proportion of high-quality data increases. This result indicates that the quality of demonstrations will significantly affect the performance of POSG and that high-quality demonstration data can contribute to policy optimization to some extent.

\section{Conclusion}
\label{sec:conclusion}
This article proposes a simple and efficient algorithm called Policy Optimization with Smooth Guidance (POSG) that leverages a small set of state-only demonstrations (where only state information is included in demonstrations) to make approximate and effective long-term credit assignments while efficiently facilitating exploration. The key idea is that the relative impact of state-action pairs can be indirectly estimated using offline state-only demonstrations. More specifically, we obtain the importance of a trajectory by considering the distributional distance between policies and the returns of the associated trajectories. Then, the smooth guidance reward is computed by smoothly averaging the trajectory importance over the trajectory space. We theoretically analyze the performance improvement caused by smooth guidance rewards and derive a new worst-case lower bound on the performance improvement. We conducted various experiments by benchmarking POSG with several state-of-the-art RL algorithms in four environments. Experimental results demonstrate POSG's superiority over other baseline approaches in terms of performance and convergence speed.

\section*{Acknowledgments}
This work was supported by the National Key R\&D Program of China (2022ZD0116401), Fundamental Research Funds for the Central Universities and the National Natural Science Foundation of China(Grant Nos. 62141605)


{\appendices
\section{Proof of Lemma~\ref{lemma:1}}

\OptimalLem*
\begin{proof}
    Consider the environments with sparse rewards, the agent will not receive any non-zero reward unless it reaches the specific goals. Hence, the return of any trajectory generated by the agent is given as:
    \begin{equation}
        R(\tau) = \left\{
        \begin{aligned}
            &r, \text{if the agent reaches the goal,}\\
            &0, \text{else,}
        \end{aligned}
        \right.
    \end{equation}
    where $r$ represents the sparse reward in the environment.
    
  According to the definition of the trajectory weight function in Eq.~\eqref{eq:omega} and the joint return function~\eqref{eq:R_j}, a current trajectory can obtain the maximal importance weight only when it has the smallest distance from the replay memory $\mathcal{P}$. The smooth guidance reward in Eq.~\eqref{eq:guidance_reward} is the expectation of the product of the importance weight and joint return value for trajectories through $(s, a)$. Therefore, the smooth guidance reward for each state-action pair $(s, a)$ is maximal only when all the trajectories through $(s, a)$ are closest to $\mathcal{P}$. Moreover, the value of $r_i(s, a)$ for the state-action pair in $\mathcal{P}$ is larger than that of the other state-action pair.
  
  Note that in Eq.~\eqref{eq:smooth_objective}, when $\gamma=1$, the smooth RL objective function can also be written in the following form:
  \begin{equation}
      \tilde{\eta}(\pi_\theta) = \mathbb{E}_{\tau\sim\pi_\theta}\left[\sum_{t=0}^T r_{i}(s_t, a_t)\right].
  \end{equation}
  where $\sum_{t=0}^T r_{i}(s_t, a_t)$ represents the smooth return value of $\tau$, and the length of all trajectories is always set to the constant $T$. Hence, according to the analysis of $r_i$, the objective function $\tilde{\eta}$ can obtain the maximum value only when $\pi$ produces the same trajectories as $\pi_b$. Furthermore, if we require $\pi$ to be the highest entropy, then the trajectory distribution of $\pi$ should be identical to that of $\pi_b$:
  \begin{equation}
      \pi(a\vert s) = \left\{
      \begin{aligned}
           \pi_b(a\vert s),\quad  &\text{if} \  (s,a)\in\operatorname{supp}(\pi_b),\\
           0,\quad  &\text{else.}
      \end{aligned}
      \right.
  \end{equation}
\end{proof}

\section{Proof of Theorem~\ref{thm:bound} and Corollary~\ref{cor:eta_lower_bound}}
The following lemmas are proved in~\cite{achiam2017constrained}, and we have excerpted them here. The detailed proof process can be found in the appendix of~\cite{achiam2017constrained}.

\begin{lemma} 
For any function $f: S\to\mathbb{R}$ and any policy $\pi$,
\begin{equation}
    (1-\gamma) \underset{s \sim \rho_0}{\mathbb{E}}\left[f(s)\right] + \underset{\begin{subarray}{c} s \sim d^{\pi} \\ a \sim \pi \\ s' \sim P\end{subarray}}{\mathbb{E}} \left[ \gamma f(s') \right] - \underset{s \sim d^{\pi}}{\mathbb{E}}\left[f(s)\right] = 0, 
\end{equation}
where $\gamma$ is the discount factor, $\rho_0$ is the starting state distribution, and $P$ is the transition probability function. 
\label{lem:2}
\end{lemma}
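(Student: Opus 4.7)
The plan is to establish this identity by exploiting the recursive structure of the discounted state-visitation distribution $d^\pi$ introduced in the preliminaries, namely $d_\pi(s) = (1-\gamma)\sum_{t=0}^{\infty}\gamma^t \mathbb{P}(s_t=s\mid \pi)$. The strategy is to first derive the Bellman flow equation for $d^\pi$, then pair both sides against the arbitrary test function $f$ and integrate out $s$; the three terms in the claimed identity will appear directly from this pairing, and rearrangement will finish the argument.

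First, I would split the defining series for $d^\pi(s')$ into the $t=0$ term, which contributes $(1-\gamma)\rho_0(s')$ via $\mathbb{P}(s_0=s'\mid\pi)=\rho_0(s')$, and the tail $\sum_{t\ge 1}\gamma^{t}\mathbb{P}(s_t=s'\mid\pi)$. Using the one-step Markov identity $\mathbb{P}(s_{t+1}=s'\mid\pi)=\sum_{s,a}\mathbb{P}(s_t=s\mid\pi)\,\pi(a\mid s)\,P(s'\mid s,a)$ and reindexing $t\mapsto t-1$, the tail becomes $\gamma\sum_{s,a}\pi(a\mid s)P(s'\mid s,a)\sum_{t\ge 0}\gamma^t\mathbb{P}(s_t=s\mid\pi)$. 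Recognizing the inner sum as $d^\pi(s)/(1-\gamma)$ and combining with the boundary term yields the Bellman flow equation
\[
d^\pi(s') \;=\; (1-\gamma)\rho_0(s') \;+\; \gamma\sum_{s,a} d^\pi(s)\,\pi(a\mid s)\,P(s'\mid s,a).
\]

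Next, I would multiply both sides by $f(s')$ and sum (or integrate) over $s'$, which, after swapping the order of summation on the right, produces
\[
\mathbb{E}_{s\sim d^\pi}[f(s)] \;=\; (1-\gamma)\,\mathbb{E}_{s\sim\rho_0}[f(s)] \;+\; \gamma\,\mathbb{E}_{s\sim d^\pi,\,a\sim\pi,\,s'\sim P}[f(s')].
\]
Moving $\mathbb{E}_{s\sim d^\pi}[f(s)]$ to the right-hand side is exactly the statement of the lemma. Since $f$ enters only linearly, no regularity beyond integrability against $d^\pi$, $\rho_0$, and the pushforward $(d^\pi\otimes\pi)P$ is required.

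The main obstacle is purely bookkeeping: one must keep careful track of the $(1-\gamma)$ normalization so that $d^\pi$ is treated as a bona fide probability distribution rather than an unnormalized occupancy measure, and one must ensure the $t=0$ boundary term produces precisely $(1-\gamma)\rho_0$ (not a bare $\rho_0$) after the reindexing. The identity is otherwise routine; it is the standard telescoping/flow property of the discounted occupancy that underpins the CPO-style performance-difference analysis invoked in Theorem~\ref{thm:bound}, which is why the excerpt is content to cite~\cite{achiam2017constrained} for the detailed derivation.
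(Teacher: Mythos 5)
Your proof is correct: the split of the defining series for $d^\pi$ at $t=0$, the one-step Markov reindexing to obtain the Bellman flow equation $d^\pi(s') = (1-\gamma)\rho_0(s') + \gamma\sum_{s,a} d^\pi(s)\pi(a\vert s)P(s'\vert s,a)$, and the pairing against $f$ all go through, and the $(1-\gamma)$ bookkeeping is handled properly. The paper does not prove this lemma itself but excerpts it from~\cite{achiam2017constrained}, whose derivation (via the Neumann-series/occupancy-flow identity $(I-\gamma P_\pi)d^\pi = (1-\gamma)\rho_0$ paired with $f$) is essentially the same argument you give in coordinates.
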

\begin{lemma}
  \label{lem:policybound0} 
  For any function $f: S \to \mathbb{R}$ and any policies $\pi'$ and $\pi$, define
  \begin{equation}
    T_{\pi,f} (\pi') \doteq \underset{\begin{subarray}{c} s \sim d^{\pi} \\ a \sim \pi \\ s' \sim P\end{subarray}}{\mathbb{E}} \left[ \left(\frac{\pi'(a\vert s)}{\pi(a\vert s)} - 1 \right) \left(R(s,a) + \gamma f(s') - f(s) \right)\right],\label{surrogate}
  \end{equation}
  and $\epsilon_f^{\pi'} \doteq \max_s \left\vert \mathbb{E}_{\begin{subarray}{c}a \sim \pi', s'\sim P\end{subarray}} [R(s,a) + \gamma f(s') - f(s)] \right\vert$. Consider the standard RL objective function $J(\pi_{\theta}) = \mathbb{E}_{\tau}\left[\sum_{t=0}^{\infty}\gamma^{t} R(s_{t},a_t)\right]$, the following bounds hold:
  \begin{align}
    J(\pi') - J(\pi) \geq \frac{1}{1-\gamma}\left(T_{\pi,f} (\pi') - 2\epsilon_f^{\pi'} D_{\rm TV} (d^{\pi'} \parallel d^{\pi})\right), \label{bound0} \\
    J(\pi') - J(\pi) \leq \frac{1}{1-\gamma}\left(T_{\pi,f} (\pi') + 2\epsilon_f^{\pi'} D_{\rm TV} (d^{\pi'} \parallel d^{\pi})\right), \label{bound0b}
  \end{align}
  where $D_{TV}$ is the total variational divergence. Furthermore, the bounds are tight (when $\pi' = \pi$, the LHS and RHS are identically zero). Here, $\gamma$ is the discount factor, and $d^{\pi}$ is the discounted future state distribution.
\end{lemma}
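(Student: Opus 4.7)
The plan is to establish a single identity that expresses $J(\pi')-J(\pi)$ in terms of the Bellman-residual shift $\delta_f(s,a,s') \doteq R(s,a) + \gamma f(s') - f(s)$, then split one distribution swap into an action-only swap (which evaluates exactly to the surrogate $T_{\pi,f}(\pi')$) and a state-only swap (which I control by a standard total-variation coupling to produce the $D_{\rm TV}$ correction).

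First I would use Lemma~\ref{lem:2} applied to an arbitrary policy $\tilde\pi$ to show
\begin{equation*}
\tfrac{1}{1-\gamma}\,\mathbb{E}_{s\sim d^{\tilde\pi},\,a\sim\tilde\pi,\,s'\sim P}\bigl[\delta_f(s,a,s')\bigr] \;=\; J(\tilde\pi) - \mathbb{E}_{s\sim\rho_0}[f(s)].
\end{equation*}
This follows by linearity: the reward term contributes $J(\tilde\pi)$ after multiplying by $1/(1-\gamma)$, and Lemma~\ref{lem:2} rearranges into $\mathbb{E}_{d^{\tilde\pi},\tilde\pi,P}[\gamma f(s')-f(s)] = -(1-\gamma)\mathbb{E}_{\rho_0}[f]$. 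Subtracting the $\pi$ instance of this identity from the $\pi'$ instance makes the $\rho_0$ term cancel, yielding
\begin{equation*}
(1-\gamma)\bigl(J(\pi') - J(\pi)\bigr) \;=\; \mathbb{E}_{s\sim d^{\pi'},\,a\sim\pi',\,s'\sim P}[\delta_f] \;-\; \mathbb{E}_{s\sim d^{\pi},\,a\sim\pi,\,s'\sim P}[\delta_f].
\end{equation*}

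Next I would insert the pivot expectation $\mathbb{E}_{s\sim d^\pi,\,a\sim\pi',\,s'\sim P}[\delta_f]$ and decompose the right-hand side into two pieces. The \emph{action-only swap} (fix the state law $d^\pi$, replace $\pi$ by $\pi'$ in the action distribution) can be rewritten against the base action law $\pi$ using the likelihood ratio, and it equals precisely the surrogate $T_{\pi,f}(\pi')$ defined in \eqref{surrogate}. The \emph{state-only swap} has the form $\mathbb{E}_{s\sim d^{\pi'}}[g(s)] - \mathbb{E}_{s\sim d^{\pi}}[g(s)]$ with $g(s) \doteq \mathbb{E}_{a\sim\pi',\,s'\sim P}[\delta_f(s,a,s')]$, and satisfies $\max_s|g(s)| = \epsilon_f^{\pi'}$ directly from the definition.

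Finally I would invoke the standard coupling bound $\bigl|\mathbb{E}_p[g]-\mathbb{E}_q[g]\bigr|\le 2\max_s|g(s)|\cdot D_{\rm TV}(p\|q)$ to sandwich the state-only swap in the interval $\pm 2\epsilon_f^{\pi'} D_{\rm TV}(d^{\pi'}\|d^\pi)$. Dividing the resulting two-sided bound by $1-\gamma$ delivers both \eqref{bound0} and \eqref{bound0b} simultaneously. Tightness is then automatic: when $\pi'=\pi$ the likelihood ratio in $T_{\pi,f}(\pi)$ equals $1$ so the surrogate vanishes, $D_{\rm TV}(d^\pi\|d^\pi)=0$, and both sides of each inequality collapse to zero. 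The main obstacle is purely bookkeeping in the Lemma~\ref{lem:2} step—tracking which integrals are taken against $d^{\tilde\pi}$ alone versus the joint $(d^{\tilde\pi},\tilde\pi,P)$—but once $\delta_f$ is identified as the natural shifted Bellman residual, the remainder reduces to elementary algebra plus a single coupling inequality.
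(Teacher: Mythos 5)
Your proposal is correct and follows essentially the same route as the canonical proof this paper defers to: the paper does not prove Lemma~\ref{lem:policybound0} itself but excerpts it from Achiam et al.~\cite{achiam2017constrained}, whose argument is exactly your decomposition of $J(\pi')-J(\pi)$ via the residual $\delta_f$, the pivot expectation $\mathbb{E}_{s\sim d^{\pi},a\sim\pi'}[\delta_f]$ splitting an action-only swap (which is $T_{\pi,f}(\pi')$) from a state-only swap, and the $\ell_1$--$\ell_\infty$ bound $\vert\mathbb{E}_p[g]-\mathbb{E}_q[g]\vert\le 2\max_s\vert g(s)\vert\,D_{\rm TV}(p\parallel q)$. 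All steps, including the tightness check at $\pi'=\pi$, are sound.
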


\begin{lemma}
  \label{lem:divergencebound}
  The divergence between discounted future state visitation distributions, $\|d^{\pi'} - d^{\pi}\|_1$, is bounded by an average divergence of the policies $\pi'$ and $\pi$:
  \begin{equation}
    \|d^{\pi'} - d^{\pi}\|_1 \leq \frac{2\gamma}{1-\gamma} \underset{s \sim d^{\pi}}{\mathbb{E}} \left[ D_{\rm TV} (\pi' \parallel \pi)[s]\right], 
  \end{equation}
  where $D_{\rm TV} (\pi'\parallel \pi)[s] = (1/2) \sum_a \vert\pi'(a\vert s) - \pi(a\vert s)\vert$ is the total variational divergence at $s$.  
\end{lemma}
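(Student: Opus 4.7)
The plan is to view $d^{\pi}$ as a Neumann-series object under the policy-induced state transition operator and then exploit a resolvent-style difference identity to rewrite $d^{\pi'} - d^{\pi}$ in a form whose $\ell_1$ norm can be controlled directly by the policy-level total variation. Let $P_{\pi}$ denote the state-to-state transition operator with kernel $P_{\pi}(s'\mid s) = \sum_a \pi(a\mid s)P(s'\mid s,a)$. From the definition $d^{\pi}(s) = (1-\gamma)\sum_{t\ge 0}\gamma^t \mathbb{P}(s_t = s\mid \pi)$ together with $\rho_0$ as the initial distribution, one obtains $d^{\pi} = (1-\gamma)\rho_0(I - \gamma P_{\pi})^{-1}$ with distributions acting on the left, and analogously for $d^{\pi'}$.

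First, I would invoke the resolvent identity $A^{-1} - B^{-1} = A^{-1}(B - A)B^{-1}$ with $A = I - \gamma P_{\pi}$ and $B = I - \gamma P_{\pi'}$, so that $B - A = \gamma(P_{\pi} - P_{\pi'})$. After rearranging so that $d^{\pi}$ itself appears as the leftmost factor, the key identity becomes
$$d^{\pi'} - d^{\pi} = \gamma\, d^{\pi}(P_{\pi'} - P_{\pi})(I - \gamma P_{\pi'})^{-1}.$$
Next, I would take the $\ell_1$ norm of both sides. Since $P_{\pi'}$ is row-stochastic, for any signed vector $v$ one has $\|v P_{\pi'}^{t}\|_1 \le \|v\|_1$, so expanding the resolvent as $\sum_{t\ge 0}\gamma^t P_{\pi'}^{t}$ yields $\|v(I - \gamma P_{\pi'})^{-1}\|_1 \le \|v\|_1/(1-\gamma)$. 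This produces the intermediate bound
$$\|d^{\pi'} - d^{\pi}\|_1 \le \frac{\gamma}{1-\gamma}\,\|d^{\pi}(P_{\pi'} - P_{\pi})\|_1.$$

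Finally, I would reduce the remaining norm to the claimed expectation over $d^{\pi}$. Writing $P_{\pi'}(s'\mid s) - P_{\pi}(s'\mid s) = \sum_{a}(\pi'(a\mid s) - \pi(a\mid s))P(s'\mid s,a)$, pulling the outer $\sum_{s'}|\cdot|$ inside by the triangle inequality, and using $\sum_{s'}P(s'\mid s,a) = 1$ gives the pointwise bound $\sum_{s'}|P_{\pi'}(s'\mid s) - P_{\pi}(s'\mid s)| \le \sum_{a}|\pi'(a\mid s) - \pi(a\mid s)| = 2D_{\rm TV}(\pi'\parallel\pi)[s]$. Weighting by $d^{\pi}(s)$ and applying the triangle inequality once more gives $\|d^{\pi}(P_{\pi'} - P_{\pi})\|_1 \le 2\,\mathbb{E}_{s\sim d^{\pi}}[D_{\rm TV}(\pi'\parallel\pi)[s]]$, which combined with the intermediate bound above is exactly the claimed inequality.

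The main obstacle is not any heavy computation but careful bookkeeping on which side each operator acts: I must choose the correct variant of the resolvent identity so that $d^{\pi}$ (not $d^{\pi'}$) is factored out on the left and the remaining resolvent is $(I - \gamma P_{\pi'})^{-1}$, which then gets absorbed into the $1/(1-\gamma)$ factor via the contractivity of the row-stochastic operator. Getting the factorization reversed would yield an analogous but wrong bound whose expectation is over $d^{\pi'}$, so the ordering of the resolvent identity is the one subtle step of the argument.
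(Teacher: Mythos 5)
Your argument is correct and is essentially the standard proof of this lemma: the paper itself does not reprove it but cites Achiam et al.\ (2017), whose Appendix establishes exactly this bound via the same resolvent/Neumann-series identity $d^{\pi'} - d^{\pi} = \gamma\, d^{\pi}(P_{\pi'} - P_{\pi})(I - \gamma P_{\pi'})^{-1}$, the $1/(1-\gamma)$ contraction bound on the resolvent of a stochastic operator, and the reduction of $\sum_{s'}\vert P_{\pi'}(s'\mid s) - P_{\pi}(s'\mid s)\vert$ to $2D_{\rm TV}(\pi'\parallel\pi)[s]$. Your care about which resolvent ends up absorbed into the $1/(1-\gamma)$ factor (so that the expectation lands on $d^{\pi}$ rather than $d^{\pi'}$) is exactly the right bookkeeping, and the proof goes through as written.
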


\mainthm*

\begin{proof}
  In the proposed methods, we use environmental rewards and smooth guidance rewards to compute the policy gradient and update the policy parameters in a two-step optimization manner. We can then analyze each optimization step's policy performance improvement bound individually. 
  
  First, in the environmental reward optimization step, the environmental reward is used to compute the policy gradient and update the policy parameters. With the bounds from Lemma~\ref{lem:policybound0} and bound the divergence $D_{\rm TV}(d^{\pi^\prime}\parallel d^\pi)$ by Lemma~\ref{lem:divergencebound}, when $f$ is the value function $V_e$ calculated by the environmental reward, we can easily obtain the following result:
  \begin{equation}
      \eta_e(\theta^\prime) - \eta_e(\theta) \geq D_{\pi} (\pi'), 
      \label{eq:bound1_1}
  \end{equation} 
  
  Second, we leverage the proposed smooth guidance reward to perform a trust-region-based policy optimization. In this case, let $R(s, a)$ in Lemma~\ref{lem:policybound0} be the smooth guidance reward $r_i(s, a)$, and let $f$ be the value function $V_i$ computed from smooth guidance rewards and $A_i$ be the advantage function. Define 
  \begin{equation}
      \label{eq:epsilon_tilde}
      \tilde{\epsilon}_{\pi^\prime} \doteq \max_s \left\vert \mathbb{E}_{a \sim \pi^\prime} [A_i(s,a)] \right\vert, 
  \end{equation}
  \begin{equation}
      \label{eq:T_tilde}
      \tilde{T}_{\pi}(\pi^\prime) \doteq \underset{\begin{subarray}{c} s\sim d^\pi\\ a\sim\pi\end{subarray}}{\mathbb{E}}\left[A_i(s,a)\right] \text{ and }
  \end{equation}
  \begin{equation}
      \label{eq:D_tilde}
      \tilde{D}_{\pi} (\pi') \doteq  \frac{\tilde{T}_{\pi} (\pi')}{1-\gamma} - \frac{2\gamma \tilde{\epsilon}_{\pi'}}{(1-\gamma)^2} \underset{s \sim d^{\pi}}{\mathbb{E}} \left[ D_{\rm TV} (\pi'\parallel\pi)[s] \right],
  \end{equation}
  With the bounds from Lemma~\ref{lem:policybound0} and bound the divergence $D_{\rm TV}(d^{\pi^\prime}\parallel d^\pi)$ by Lemma~\ref{lem:divergencebound}, the following result can be easily obtained: 
  \begin{equation}
      \tilde{\eta}(\pi^\prime) - \tilde{\eta}(\pi) \geq \tilde{D}_{\pi}(\pi'), 
      \label{eq:bound1_2}
  \end{equation}

  According to Assumption~\ref{ass:smooth_rewards}, substituting Eq.~\eqref{eq:ass_A_i} into Eq.~\eqref{eq:D_tilde}, we then have the following equation hold:
  \begin{equation}
      \label{eq:relation_tilde_D_and_D}
      \tilde{D}_{\pi} (\pi') \ge \lambda D_{\pi} (\pi')
  \end{equation}
  and
  \begin{equation}
      \label{eq:tilde_eta_bound_D}
      \tilde{\eta}(\pi^\prime) - \tilde{\eta}(\pi) \geq \lambda D_{\pi} (\pi'),
  \end{equation}
  The two above equations indicate that the original RL objective can further obtain performance improvement with the update of the smooth guidance reward. Finally, combining Eq.~\eqref{eq:bound1_1} with Eq.~\eqref{eq:tilde_eta_bound_D}, we then have $\eta_{total} = \eta + \tilde{\eta}$ satisfy the following relationship:
  \begin{equation}
      \eta(\pi^\prime) - \eta(\pi) \geq (1+\lambda)D_{\pi} (\pi').
  \end{equation}
\end{proof}

\WorstCaseBound*
\begin{proof}
    The result is obtained by combining Pinsker's inequality in Eq~\eqref{eq:tvkl} with Theorem~\ref{thm:bound}.
\end{proof}

\section{POSG Algorithm Training Process in Discrete Spaces}
Algorithm~\ref{algo:algo2} describes POSG in environments with discrete action space. 

\renewcommand{\algorithmicrequire}{\textbf{Input:}}
\renewcommand{\algorithmicensure}{\textbf{Output:}}
\begin{algorithm}[htb]
\caption{POSG in Discrete Spaces}
\label{algo:algo2}
\begin{algorithmic}[1]
  \REQUIRE learning rate $\alpha$, offline state-only demonstration dataset $\mathcal{P}$, policy update frequency $K$
  \STATE Initialize policy parameters $\theta$
  \STATE \textsc{// Buffer that stores key-value pairs of a state-action tuple $(\boldsymbol{s},\boldsymbol{a})$ and a list of values of trajectory importance that include $(\boldsymbol{s},\boldsymbol{a})$}
  \STATE Initialize $\mathcal{M} \leftarrow\emptyset\ \forall (\boldsymbol{s},\boldsymbol{a})$ 
  \FOR{$\text{episode}\in\{0,\dots,T\}$}
  \STATE $\tau\leftarrow\emptyset$\quad\quad\quad\quad \textsc{// Store state-action pairs for current episode}
  \STATE{$R\leftarrow 0$}\quad\quad\quad\quad \textsc{// Accumulate rewards for current episode}
  \FOR{each step in $\{1,\dots,T\}$}
  \STATE Choose $\boldsymbol{a}$ from $\boldsymbol{s}$ using $\pi_\theta$
  \STATE take action $\boldsymbol{a}$ and observe $\boldsymbol{r}_e$ and $\boldsymbol{s}^\prime$
  \STATE $\tau\leftarrow\tau\cup\{(\boldsymbol{s}, \boldsymbol{a})\}$; $R\leftarrow R + \boldsymbol{r}_e$
  \ENDFOR
  \STATE Compute the trajectory weight $\omega(\tau\vert\mathcal{M}_E)$ and the joint return $R_{j}(\tau)$
  \STATE Compute the trajectory importance $I(\tau) = \omega(\tau\vert\mathcal{M}_E)R_{j}(\tau)$
  \FOR{each $((\boldsymbol{s},\boldsymbol{a}))$ in $\tau_e$}
  \STATE $\boldsymbol{r}_i=\mathbb{E}_{I\sim\mathcal{M}(\boldsymbol{s},\boldsymbol{a})}[I]$\quad\quad \textsc{// Compute guidance rewards for each state-action pair}
  \ENDFOR
  \IF{$\text{episode}\, \%\, K == 0$}
  \STATE Compute $A_e$ using environmental rewards
  \STATE Compute $A_i$ using guidance rewards
  \STATE $\theta \leftarrow \theta + \alpha\nabla\eta_e(\theta)$ \quad\quad \textsc{// Update parameters using environmental rewards}
  \STATE $\theta \leftarrow \theta + \alpha\nabla\eta_i(\theta)$ \quad\quad\quad\quad \textsc{// Update parameters using guidance rewards}
  \ENDIF
  \ENDFOR
\end{algorithmic}
\end{algorithm}

\section{POSG Algorithm Training Process in High-Dimensional Continuous Spaces}
Algorithm~\ref{algo:algo1} describes POSG in environments with high-dimensional continuous action space. 

\renewcommand{\algorithmicrequire}{\textbf{Input:}}
\renewcommand{\algorithmicensure}{\textbf{Output:}}
\begin{algorithm}[htb]
\caption{POSG in High-dimensional Continuous Spaces}
\label{algo:algo1}
\begin{algorithmic}[1]
  \REQUIRE learning rate $\alpha$, offline state-only demonstration dataset $\mathcal{P}$, policy update frequency $K$
  \STATE Initialize policy parameters $\theta$ 
  \STATE \textsc{// Buffer that stores key-value pairs of a state-action tuple $(\boldsymbol{s},\boldsymbol{a})$ and a list of values of trajectory importance that include $(\boldsymbol{s},\boldsymbol{a})$}
  \STATE Initialize $\mathcal{M} \leftarrow\emptyset\ \forall (\boldsymbol{s},\boldsymbol{a})$ 
  \FOR{$\text{episode}\in\{0,\dots,T\}$}
  \STATE $\tau\leftarrow\emptyset$\quad\quad\quad\quad \textsc{// Store state-action pairs for current episode}
  \STATE{$R\leftarrow 0$}\quad\quad\quad\quad \textsc{// Accumulate rewards for current episode}
  \FOR{each step in $\{1,\dots,T\}$}
  \STATE Choose $\boldsymbol{a}$ from $\boldsymbol{s}$ using $\pi_\theta$
  \STATE take action $\boldsymbol{a}$ and observe $\boldsymbol{r}_e$ and $\boldsymbol{s}^\prime$
  \STATE $\tau\leftarrow\tau\cup\{(\boldsymbol{s}, \boldsymbol{a})\}$; $R\leftarrow R + \boldsymbol{r}_e$
  \ENDFOR
  \STATE Compute the trajectory weight $\omega(\tau\vert\mathcal{M}_E)$ and the joint return $R_{j}(\tau)$
  \STATE Compute the guidance reward $I(\tau) = \omega(\tau\vert\mathcal{M}_E)R_{j}(\tau)$
  \IF{$\text{episode}\, \%\, K == 0$}
  \STATE Compute $A_e$ using environmental rewards
  \STATE Compute $A_i$ using guidance rewards
  \STATE $\theta \leftarrow \theta + \alpha\nabla\eta_e(\theta)$ \quad\quad \textsc{// Update parameters using environmental rewards}
  \STATE $\theta \leftarrow \theta + \alpha\nabla\eta_i(\theta)$ \quad\quad\quad\quad \textsc{// Update parameters using guidance rewards}
  \ENDIF
  \ENDFOR
\end{algorithmic}
\end{algorithm}

 

\bibliographystyle{IEEEtran}
\bibliography{reference}

\end{document}